\definecolor{UCnavyy}{rgb}{0.094117, 0.168627, 0.2862745}           
\definecolor{navy}{rgb}{0, 0.415686, 0.588235}  
\definecolor{cf9f9f9}{RGB}{249,249,249}
\definecolor{cb3b3b3}{RGB}{179,179,179}
\definecolor{c808080}{RGB}{128,128,128}
\definecolor{c1a1a1a}{RGB}{26,26,26}
\definecolor{cffffff}{RGB}{255,255,255}
\newtheorem{theorem}{Theorem}
\newtheorem{lemma}[theorem]{Lemma}
\newtheorem{example}[theorem]{Example}
\newtheorem{proposition}[theorem]{Proposition}
\newtheorem{assumption}{Assumption}
\newtheorem{definition}[theorem]{Definition}
\newtheorem{remark}[theorem]{Remark}
\newtheorem*{informaltheorem*}{Theorem}
\renewcommand{\epsilon}{\varepsilon}
\DeclareMathOperator*{\E}{\mathbb{E}}
\DeclareMathOperator*{\argmin}{\mathrm{arg\,min}}
\DeclareMathOperator*{\arginf}{\mathrm{arg\,inf}}
\newcommand{\ind}{\mathbbm{1}}
\newmdenv[
  topline=false,
  bottomline=false,
  rightline=false,
  skipabove=\topsep,
  skipbelow=\topsep,
  innertopmargin=0pt,
  innerbottommargin=0pt
]{siderules}
\title{Online nearest neighbor classification}
\author{%
  Sanjoy Dasgupta \hspace{5em} Geelon So\\
  University of California, San Diego\\
  La Jolla, CA 92093\\
  \texttt{\{dasgupta,agso\}@ucsd.edu} 
}
\begin{document}
\maketitle

\begin{abstract}%
  We study an instance of online non-parametric classification in the realizable setting. 
  In particular, we consider the classical 1-\ref{alg:online-nnc} algorithm, and show that it achieves sublinear regret---that is, a vanishing mistake rate---against dominated or smoothed adversaries in the realizable setting.
\end{abstract}

\section{Introduction}
In \emph{online classification}, a learner observes a stream of data points $x_t$ from an instance space $\mathcal{X}$, and it is tasked with sequentially making predictions $\hat{y}_t$ about their classes $y_t$ coming from some label space $\mathcal{Y}$. At each point in time $t = 1,2,\ldots$
\begin{itemize}
    \item[-] the learner is presented with an instance $x_t \in \mathcal{X}$
    \item[-] the learner makes a prediction $\hat{y}_t \in \mathcal{Y}$
    \item[-] the label $y_t$ is revealed, and the learner incurs some loss $\ell(x_t, y_t, \hat{y}_t)$,
\end{itemize}
where $\ell(x, y,\hat{y})$ is a non-negative, bounded loss function satisfying $\ell(x,y,y) = 0$ (there is no penalty for a correct prediction). The learner's performance is given by its \emph{regret} at any time $T$, defined as the difference between the learner's cumulative loss and that of the best fixed classifier $h : \mathcal{X} \to \mathcal{Y}$ that the learner would have chosen  in hindsight from some comparator class $\mathcal{H}$,
\[\mathrm{regret}_T := \sum_{t=1}^T \ell\big(x_t, y_t, \hat{y}_t\big) - \inf_{h \in \mathcal{H}}\, \sum_{t=1}^T \ell\big(x_t, y_t, h(x_t)\big).\]
Learning in the online setting means achieving sublinear regret, $\mathrm{regret}_T = o(T)$, for then the average loss of the online learner is asymptotically no worse than the average loss of the offline learner who had access to the data $(x_1, y_1),\ldots, (x_T, y_T)$ all at once. 

While in the worst-case setting, this sequence of instances and labels may be completely arbitrary, we consider the more restrictive \emph{realizable setting}, in which a concept $c : \mathcal{X} \to \mathcal{Y}$ is fixed at the onset (though it may be chosen adversarially) and describes the labels $y_t = c(x_t)$ for all time.

\begin{figure*}[t]
\begin{center}
\begin{minipage}{0.9\textwidth}
\hrulefill
\begin{algorithm}{nearest neighbor}{
    \label{alg:online-nnc}
    }
    \qfor $t = 1,2,\ldots$\\ 
    \qdo
        receive data point\ \ $x_t$ \\
        compute nearest neighbor \ \
        \smash{$\displaystyle \mathrm{NN}_t := \argmin_{\tau = 1,\ldots, t-1} \, \rho(x_t, x_\tau)$}\\
        make prediction \ \ $\hat{y}_t = y_{\mathrm{NN}_t}$\\
        receive label\ \  $y_t$
    \qrof
\end{algorithm}
\hrulefill
\end{minipage}
\vspace{5pt}

\begin{minipage}{0.85\textwidth}
\small{\textsc{the nearest neighbor rule}. Assume ties in step 3 are broken arbitrarily.} 
\end{minipage}
\end{center}
\end{figure*}

In this paper, we further let $(\mathcal{X}, \rho)$ be a metric space, and we consider online classification through the 1-\ref{alg:online-nnc} rule. This algorithm, first introduced by \cite{fix1951discriminatory}, is a particularly appealing learning algorithm due to its simplicity: this learner memorizes everything it sees. Then, given some instance $x$, it searches for the nearest neighbor among previously seen instances $x_1,\ldots, x_{t}$, returning the corresponding label as the prediction $\hat{y}$. We ask:
\vspace{7pt}
\begin{center}
\begin{siderules}
\begin{minipage}{0.99\textwidth} \raggedright
    \paragraph{Question} What are general conditions under which the 1-\ref{alg:online-nnc} rule achieves sublinear regret in the realizable \ref{protocol:online-learning} setting?
\end{minipage}
\end{siderules}
\end{center}
In our setting, when $\mathcal{H}$ is the family of all nearest-neighbor classifiers, the best hindsight classifier in $\mathcal{H}$ makes no mistakes, and so the regret consists only of the cumulative loss term; we simply aim to understand when the average loss of the nearest neighbor rule converges to zero:
\begin{equation}  \label{eqn:learning}
\mathrm{average\ loss}_T := \frac{1}{T} \sum_{t=1}^T \ell(x_t, y_t, \hat{y}_t) \to 0.
\end{equation}

\subsection{A negative result: the worst-case adversary}
When the comparator class $\mathcal{H}$ can interpolate the sequence of data, learning in the worst-case setting is generally intractable---even in the realizable setting. Unless the learner exactly recovers the underlying concept, a worst-case adversary (or indeed, a best-case teacher) can at each time step find test instances on which the learner errs; the average loss fails to converge to zero. 

\begin{example}[Failing to learn the sign function]
Consider the sign function $\mathrm{sign}(x) := \ind_{x \geq 0}$ on $\mathcal{X} = \mathbb{R}$. The \ref{alg:online-nnc} makes a mistake every round on the sequence of instances:
\[x_t = \big(-1/3\big)^t.\]
At time $t + 1$, the nearest neighbor for $x_{t+1}$ is $x_t$, which has the opposite sign (see \Cref{fig:threshold}).
\end{example}

The above negative example relies on the worst-case adversary's ability to select instances with arbitrary precision in order to construct a hard sequence. For \ref{alg:online-nnc}, the hardness of a point can be related to its separation from points of different classes---constructing a hard sequence like the one above is possible precisely whenever the classes are not separated:

\begin{proposition}[Non-convergence in the worst-case] \label{prop:nonconvergence}
Let $(\mathcal{X}, \rho)$ be a totally bounded metric space and $c$ be a concept. Let $\ell$ be the zero-one loss $\ell(x, y, \hat{y}) = \ind\{y \ne \hat{y}\}$. There is a sequence of instances $(x_t)_t$ on which the \ref{alg:online-nnc} rule fails to achieve sublinear regret on $c$ if and only if there is no positive separation between classes:
\[\inf_{c(x) \ne c(x')}\, \rho(x,x') = 0.\]
\end{proposition}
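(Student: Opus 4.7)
Both directions of the equivalence hinge on total boundedness of $(\mathcal{X},\rho)$, and I treat them separately.

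\textit{If direction (contrapositive).} Assume $\inf_{c(x)\ne c(x')} \rho(x,x') = \delta > 0$. By total boundedness, cover $\mathcal{X}$ by finitely many balls $B_1,\ldots,B_N$ of radius $\delta/3$. Any two points in a common $B_i$ lie within $2\delta/3 < \delta$ of each other and therefore must share a class. I claim \ref{alg:online-nnc} can err on $x_t$ only when every ball containing $x_t$ held no previously presented point: otherwise that prior point is within $2\delta/3$ of $x_t$, so the true nearest neighbor is also within $\delta$ and hence of the same class as $x_t$. Mistakes therefore occur only the first time any $B_i$ is visited, so the total mistake count is at most $N$, giving $O(1)$ regret on every sequence.

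\textit{Only-if direction.} Assume $\inf_{c(x)\ne c(x')} \rho(x,x') = 0$ and pick pairs $(p_n, q_n) \in \mathcal{X}^2$ of opposite class with $\rho(p_n, q_n) \to 0$. Total boundedness gives a Cauchy subsequence of $(p_n)$, along which $(q_n)$ is Cauchy with the same limit $p^*$ in the completion $\overline{\mathcal{X}}$. Pigeonholing on labels along a further subsequence, fix labels $a \ne b$ with every selected $p$-point of class $a$ and every selected $q$-point of class $b$; thus $p^*$ is approached from within $\mathcal{X}$ by points of both classes. I then build $x_1, x_2, \ldots$ inductively with alternating classes: given $x_1, \ldots, x_{t-1}$, pick $x_t \in \mathcal{X}$ whose class differs from $c(x_{t-1})$ and satisfies $\rho(x_t, p^*) < \rho(x_{t-1}, p^*)/3$ --- possible because both classes accumulate at $p^*$. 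The triangle inequality yields $\rho(x_t, x_{t-1}) \le (4/3)\,\rho(x_{t-1}, p^*)$, while the geometric shrinking gives $\rho(x_m, p^*) \ge 3\,\rho(x_{t-1}, p^*)$ for any $m < t-1$, hence $\rho(x_t, x_m) \ge (8/3)\,\rho(x_{t-1}, p^*)$. Thus $x_{t-1}$ is the unique nearest neighbor of $x_t$ and, since classes alternate, \ref{alg:online-nnc} errs on every $x_t$ with $t \ge 2$; the average loss tends to $1$.

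The delicate point is that $p^*$ need not lie in $\mathcal{X}$ --- total boundedness alone only supplies a limit in the completion. The construction finesses this by using $p^*$ purely as an external reference center for a geometric shrinking scheme; the presented sequence and every nearest-neighbor computation remain entirely inside $\mathcal{X}$.
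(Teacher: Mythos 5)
Your proof of the ``if'' direction (positive separation implies finitely many mistakes) is correct and matches the paper's argument in spirit: you cover $\mathcal{X}$ by finitely many small balls that are automatically mutually-labeling, exactly as the paper does via Lemma~\ref{lem:ball-mutually-labeling}, and observe that at most one mistake can occur per ball.

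The ``only-if'' direction has a genuine gap. Your alternating construction needs, at \emph{every} step $t$, a point of the prescribed class (alternating between $a$ and $b$) at a strictly smaller \emph{positive} distance from $p^*$. This is unavailable precisely when $p^* \in \mathcal{X}$ and $p^*$ is an isolated point of its own class $S_{c(p^*)}$ --- that is, no other point of class $c(p^*)$ accumulates at $p^*$. Then the alternating scheme jams: the first time you are forced to pick $x_t = p^*$ (because no other point of that class lies inside the shrinking ball), you get $\rho(x_t, p^*) = 0$, and the next step demands a point at negative distance. A concrete instance: $\mathcal{X} = \{0\} \cup \{1/n : n \geq 1\} \subset \mathbb{R}$ with $c(0) = 1$ and $c(1/n) = 0$. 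The separation is $0$, $p^*$ must be $0$, and the only class-$1$ point is $0$ itself; your Cauchy subsequence $(p_{n_k})$ is forced to be constantly $0$, and the statement ``both classes accumulate at $p^*$'' is only trivially true. Note the irony: the case you flagged as delicate, $p^* \notin \mathcal{X}$, is actually safe (both subsequences then consist of points of $\mathcal{X}$ at positive distance from $p^*$); it is $p^* \in \mathcal{X}$ that breaks your argument. The repair is short. If $p^* \in \mathcal{X}$, note that $m_c(p^*) = 0$ (since $q_{n_k} \to p^*$ have a different class from $p^*$), so some class $\beta \ne c(p^*)$ has points at arbitrarily small positive distance from $p^*$. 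Present $x_1 = p^*$ and then pick $x_t$ of class $\beta$ with $\rho(x_t, p^*) < \rho(x_{t-1}, p^*)/3$ for $t \geq 2$. The same triangle-inequality estimates you wrote show that $x_1 = p^*$ is the strict nearest neighbor of every subsequent $x_t$, so the learner predicts $c(p^*) \ne \beta$ and errs at every round $t \geq 2$. When $p^* \notin \mathcal{X}$, your alternating construction goes through unchanged.
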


This makes sense, since the inductive bias built into the nearest neighbor rule is that most points are surrounded by other points of the same class (though one might have to zoom in very close to a point before the labels of its surrounding neighbors become pure). Boundary points are not amenable to the nearest neighbor rule since their labels can't be learned from neighbors, nor do their labels consistently generalize to nearby points. 

Intuitively, the nearest neighbor learner fares poorly if faced with an adversary that can take advantage of boundary points by selecting instances with arbitrary precision. However, it may be able to perform well if its adversary doesn't have unbounded power to find these hard points near the boundary. In this paper, we make this intuition precise through the smoothed analysis of nearest neighbors.

\subsection{Smoothed analysis of online learning}
While the nearest neighbor algorithm does not perform well in all worlds, we might reasonably expect to not live in the worst-case world. In that case, the worst-case analysis of nearest neighbor does not necessarily help elucidate the behavior of the algorithm in practice. 

This motivates the \emph{smoothed analysis} of online learning algorithms, in which the adversary does not directly select instances, but rather distributions $\mu_t$ from which the instances $x_t$ are then drawn. If the distributions are fixed for all time, we recover the i.i.d.\@ setting. If they may be point masses, we recover the worst-case setting. But somewhere in between, the smoothed online setting might also capture more tractable and realistic learning settings, and has been previously studied by \cite{rakhlin2011online,haghtalab2020smoothed,haghtalab2022smoothed,block2022smoothed}.

The following interaction protocol formalizes the \ref{protocol:online-learning} setting:
\begin{figure*}[h]
\begin{center}
\begin{minipage}{0.9\textwidth}
\hrulefill
\begin{protocol}{smoothed online classification}{
    \label{protocol:online-learning}}
    learner selects a prediction strategy $\mathcal{A}$\\
    adversary selects ground truth concept $c$ with knowledge of $\mathcal{A}$\\
    \qfor $t = 1,2,\ldots$\\ 
    \qdo
        adversary selects data distribution $\mu_t$ on $\mathcal{X}$ and draws sample $x_t \sim \mu_t$ \\
        learner makes prediction $\hat{y}_t$ given $x_t$ according to $\mathcal{A}$\\
        learner incurs loss $\ell(x_t, y_t, \hat{y}_t)$ where $y_t = c(x_t)$
    \qrof
\end{protocol}
\hrulefill
\end{minipage}
\vspace{5pt}

\begin{minipage}{0.85\textwidth}
\small{\textsc{smoothed online classification}. By the end of each round, both the adversary and learner sees all of the triple $(x_t, y_t, \hat{y}_t)$. The distribution $\mu_t$ remains hidden to the learner. } 
\end{minipage}
\end{center}
\end{figure*}

One common smoothed setting is the Gaussian perturbation model \citep{spielman2009smoothed}, where the adversary selects $\mu_t$ in the form of a Gaussian $\mathcal{N}(\tilde{x}_t, \sigma^2 I)$. Another natural setting is the $\sigma$-smoothed adversary model \citep{haghtalab2020smoothed}, where there is some base distribution $\nu$ on the instance space $\mathcal{X}$, and the adversary is constrained to not boost the probability mass of any region $A \subset \mathcal{X}$ by more than a multiplicative factor $\sigma^{-1}$, so $\mu_t(A) \leq \sigma^{-1} \cdot \nu(A)$. 

We distill the key property of these smoothed adversaries through the notion of a \emph{dominated adversary} on a measure space $(\mathcal{X}, \nu)$. The dominated adversary is simply one that cannot place a constant probability mass $\mu(A)$ on region $A \subset \mathcal{X}$ with arbitrarily small $\nu$-mass. We define:

\begin{definition}[Dominated adversary] \label{def:dominated-adversary} Let $(\mathcal{X}, \nu)$ be a measure space. The measure $\nu$ \emph{uniformly dominates} a family $\mathcal{M}$ of probability distributions on $\mathcal{X}$ if for all $\epsilon > 0$ there exists $\delta > 0$ such that:
\[\nu(A) < \delta \quad \Longrightarrow \quad \mu(A) < \epsilon,\]
for all $A \subset \mathcal{X}$ measurable and distribution $\mu \in \mathcal{M}$. A \ref{protocol:online-learning} adversary is \emph{$\nu$-dominated} if at all times $t$ it selects $\mu_t$ from a family of distributions uniformly dominated by $\nu$.
\end{definition}

To see why this is helpful, let's say that $A_t \subset \mathcal{X}$ is the set of points on which the learner makes mistakes at time $t$. For a learner's error rate to converge to zero against a dominated adversary, it suffices to prove that the sequence $\nu(A_t)$ converges to zero: the probability that the dominated adversary induces a mistake $\mu_t(A_t)$ must also converge to zero since the $\mu_t$'s are uniformly dominated by $\nu$. Convergence of the average loss then follows from the law of large numbers for martingales.

Of course, this captures only a narrow set of scenarios where learning succeeds---in general, the convergence of the mistake region to a null set is a much stronger than the convergence of the mistake rate to zero. For example, if the adversary never tests on some region of the space, the average loss could still converge to zero even though the size of the mistake region might not. Instead, we shall argue that under mild boundary conditions, all but finitely many mistakes that a nearest neighbor learner makes must come from a very small set of `hard points' (small with respect to $\nu$). But as the adversary is $\nu$-dominated, those instances can come only very infrequently.

\begin{figure}[t]
    \centering
    \begin{tikzpicture}
    \draw (-7.2,0) -- (7.2,0);
    \filldraw [gray] (0,0) circle (2pt);
    \node at (0, -0.5) {0};
    
    \filldraw [black] (-7,0) circle (2pt);
    \node at (-7, 0.5) (x1) {$x_1$};

    \filldraw [black] (7/2.5,0) circle (2pt);
    \node at (7/2.5, 0.5) (x2) {$x_2$};

    \filldraw [black] (-7/6.25,0) circle (2pt);
    \node at (-7/6.25, 0.5) (x3) {$x_3$};

    \filldraw [black] (7/15.625, 0) circle (2pt);
    \node at (7/15.625, 0.5) (x4) {$\dotsm$};

    \path[->] (x1) edge[bend left] node [left] {} (x2);
    \path[->] (x2) edge[bend right] node [left] {} (x3);
    \path[->] (x3) edge[bend left] node [left] {} (x4);
    
    \end{tikzpicture}
    \caption{Learning the sign function $\ind_{\{x \geq 0\}}$ on $\mathbb{R}$. The nearest neighbor classifier makes a mistake every single round on the sequence $x_t = (-1/3)^t$, where each subsequent test point alternates sign.}
    \label{fig:threshold}
\end{figure}
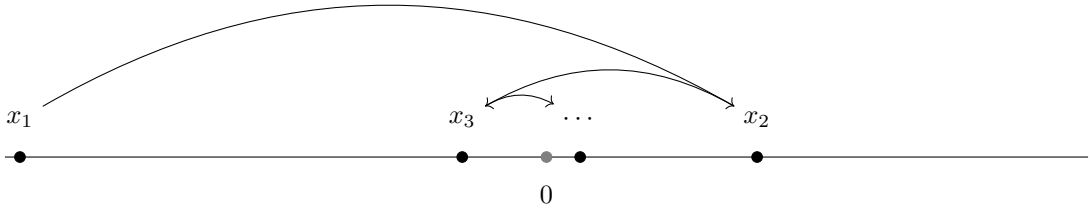

\subsection{Main results}
Let $(\mathcal{X}, \rho, \nu)$ be a metric measure space. Assume that $\rho$ is a separable metric and $\nu$ is a finite Borel measure. We prove that under mild boundary conditions, the \ref{alg:online-nnc} rule achieves sublinear regret in the \ref{protocol:online-learning} setting against $\nu$-dominated adversaries. 

To state the boundary condition, let's formalize the notion of boundary points. Given a concept $c : \mathcal{X} \to \mathcal{Y}$, define the margin $m_c(x)$ of a point $x \in \mathcal{X}$ as its distance to points of different classes:
\[m_c(x) := \inf_{c(x) \ne c(x')}\, \rho(x, x').\]
We say that $x$ is a \emph{boundary point} of $c$ if $m_c(x) = 0$, which is to say that it is arbitrarily close to points of other classes. Denote the set of boundary points by $\partial \mathcal{X}$. The condition we require is this:
\begin{assumption}[Boundary condition] \label{ass:boundary}
The set of boundary points $\partial \mathcal{X}$ is essentially countable. That is, it is the union of a countable set and a $\nu$-measure zero set.
\end{assumption}

This boundary condition is the same condition required by \cite{cover1967nearest} to prove the consistency of 1-\ref{alg:online-nnc} in the i.i.d.\@ setting. We can now state our main result:

\begin{theorem}[Convergence of nearest neighbor] \label{thm:conv-nn}
Let $(\mathcal{X}, \rho, \nu)$ be a metric measure space, where $\rho$ is a separable metric and $\nu$ is a finite Borel measure. Let $c : \mathcal{X} \to \mathcal{Y}$ satisfy Assumption~\ref{ass:boundary}. Then, the \ref{alg:online-nnc} rule achieves sublinear regret when learning $c$ against a $\nu$-dominated adversary. In particular, the average loss converges to zero:
\[\phantom{\quad\mathrm{a.s.}}\lim_{T\to\infty} \, \frac{1}{T} \sum_{t=1}^T \ell(x_t, y_t, \hat{y}_t) = 0\quad \mathrm{a.s.}\]
\end{theorem}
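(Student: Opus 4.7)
The plan is to bound, for every $\epsilon>0$, the total number of mistakes up to time $T$ by $K_\epsilon + \epsilon T + o(T)$ almost surely, where $K_\epsilon$ is deterministic; intersecting the resulting probability-$1$ events over a countable sequence $\epsilon_n\downarrow 0$ then yields $T^{-1}\sum_{t}\ell(x_t,y_t,\hat{y}_t)\to 0$ almost surely. I would stratify $\mathcal{X}$ by margin: let $M_k=\{x:m_c(x)>1/k\}$ and $S_k=\{x:0<m_c(x)\le 1/k\}$, so that $\mathcal{X}=M_k\sqcup S_k\sqcup\partial\mathcal{X}$, and bound each piece's contribution to the mistake count separately.

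For the interior $M_k$, the key geometric observation is that if $x_\tau$ is any stored point lying in a ball $B$ of radius $1/(2k)$ and $x\in B\cap M_k$, then $\rho(x,x_\tau)\le 1/k<m_c(x)$; this forces $c(x_\tau)=c(x)$ and places every different-class point at distance strictly greater than $\rho(x,x_\tau)$, so the nearest-neighbor prediction at $x$ is correct regardless of tie-breaking. By separability of $\rho$ and finiteness of $\nu$, $M_k$ is covered by countably many such balls and admits a finite subcover $B_1^k,\dots,B_{N_k}^k$ whose residual $R_k:=M_k\setminus\bigcup_i B_i^k$ has arbitrarily small $\nu$-mass. Hence mistakes inside $M_k$ over $T$ rounds are at most $N_k$ (one first-hit mistake per ball, after which that ball is ``saturated'' forever) plus $\sum_{t\le T}\ind\{x_t\in R_k\}$.

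For the boundary, use Assumption~\ref{ass:boundary} to write $\partial\mathcal{X}=C\cup N$ with $C=\{c_1,c_2,\dots\}$ countable and $\nu(N)=0$. Uniform domination gives $\mu_t(N)=0$, so $N$ contributes no mistakes at all. Each $c_j\in C$ is mispredicted at most once: after the first time $c_j$ is sampled and stored, every subsequent test at $c_j$ finds $c_j$ itself in the database at distance $0$, which (since $\rho$ is a metric) is the only distance-$0$ candidate, so the prediction is correct. Splitting $C=\{c_1,\dots,c_J\}\cup C_{>J}$, the prefix contributes at most $J$ mistakes. Since $\nu$ is finite and $S_k,C_{>J}\downarrow\emptyset$ (and $R_k$ can be made small by choice of the subcover), I would choose $k$ and $J$ so that $\nu(R_k\cup S_k\cup C_{>J})<\delta_\epsilon$, where $\delta_\epsilon$ is given by uniform domination; this makes $\mu_t(R_k\cup S_k\cup C_{>J})<\epsilon$ uniformly in $t$.

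Collecting the pieces, the total mistake count up to $T$ is at most $(N_k+J)+\sum_{t\le T}\ind\{x_t\in R_k\cup S_k\cup C_{>J}\}$. A strong law of large numbers for bounded $\{0,1\}$-valued martingale differences (for instance Azuma--Hoeffding plus Borel--Cantelli applied to $Z_t=\ind\{x_t\in A\}-\mu_t(A)$) shows $T^{-1}\sum_{t\le T}\ind\{x_t\in A\}\le\sup_t\mu_t(A)+o(1)$ almost surely for any fixed measurable $A$, giving the desired $K_\epsilon+\epsilon T+o(T)$ bound. The main obstacle is that essential countability does \emph{not} imply $\nu(\partial\mathcal{X})=0$---the countable part $C$ may carry $\nu$-atoms---so mistakes on $\partial\mathcal{X}$ cannot be dismissed by the dominance property alone. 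Instead, the countability of $C$ must be exploited directly, ensuring that each atom generates at most one mistake while the tail of $C$ is controlled via uniform domination.
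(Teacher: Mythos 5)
Your proof is correct and takes essentially the same route as the paper: cover the high-margin region by small balls that become permanently correct after a single hit, handle the countable boundary atoms by singleton saturation, control the residual (the $\nu$-null boundary part plus a small uncovered region) via uniform domination together with a martingale strong law of large numbers, and intersect the resulting probability-one events over a countable $\epsilon_n\downarrow 0$. The paper packages the first two steps into \emph{mutually-labeling sets} and an abstract \emph{online local consistency} theorem, covering all of $\mathcal{X}\setminus\partial\mathcal{X}$ at once with the variable-radius balls $B(x,m_c(x)/3)$; you instead stratify by margin level $M_k$ with fixed-radius $1/(2k)$ balls and absorb the thin shell $S_k$ into the residual, which is a minor technical variation rather than a genuinely different argument.
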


To show this, we prove a general condition in \Cref{sec:olc-conv} under which online learning is possible against a dominated adversary.  \Cref{sec:nnc-olc} shows that \ref{alg:online-nnc} satisfies this condition. We also derive rates of convergence in \Cref{sec:rates}. Here is a simple instantiation of more general rates:

\begin{theorem}[Rate of convergence for nearest neighbor]
     Let $\mathcal{X} \subset \mathbb{R}^d$ be the unit ball. Assume $d > 1$. Let the set of boundary points of $c : \mathcal{X} \to \mathcal{Y}$ have finite Minkowski content with respect to the Lesbegue measure and let the adversary be $\sigma$-smoothed. Let $p > 0$. With probability at least $1 - p$, the \ref{alg:online-nnc} rule satisfies the error rate bound simultaneously for all time $T$:
    \[\frac{1}{T} \sum_{t=1}^T \ell(x_t,y_t, \hat{y}_t) \leq \left(\frac{T}{\sigma}\right)^{(-1 + o(1))/(d+1)}.\]
\end{theorem}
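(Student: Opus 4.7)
The plan is a scale-balancing argument parameterized by a radius $r > 0$ to be optimized later. I would decompose every mistake at round $t$ into one of two types. Define the $r$-tube around the decision boundary, $\partial_r \mathcal{X} := \{x \in \mathcal{X} : m_c(x) \leq r\}$. The critical observation is that if $x_t \notin \partial_r \mathcal{X}$ and the nearest previous sample satisfies $\rho(x_t, x_{\mathrm{NN}_t}) \leq r$, then the open ball of radius $r$ around $x_t$ is label-pure and so the nearest neighbor has the correct label. Consequently every mistake must satisfy either (a) $x_t \in \partial_r \mathcal{X}$, or (b) the nearest-neighbor distance $\rho(x_t, x_{\mathrm{NN}_t})$ exceeds $r$.

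For type (a), I would invoke finite Minkowski content, which gives $\nu(\partial_r \mathcal{X}) \leq C r$ for a constant $C$ and all sufficiently small $r$. The $\sigma$-smoothing assumption then yields $\mu_t(\partial_r \mathcal{X}) \leq C r / \sigma$ at every round, and a Freedman-style concentration bound on the martingale differences $\ind\{x_t \in \partial_r \mathcal{X}\} - \mu_t(\partial_r \mathcal{X})$ bounds the number of (a)-mistakes through time $T$ by $O(Tr/\sigma)$ plus lower-order concentration slack, with probability at least $1 - p$. For type (b), cover the unit ball $\mathcal{X} \subset \mathbb{R}^d$ by $N_r = O(r^{-d})$ Euclidean balls of radius $r/2$; once a cover ball contains at least one prior sample, every later point landing in the same cover ball has nearest-neighbor distance at most $r$, so type (b) can occur at most $N_r$ times, deterministically.

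Summing and choosing $r = (\sigma/T)^{1/(d+1)}$ to balance the two leading terms $Tr/\sigma$ and $r^{-d}$ gives a total mistake count of $O((T/\sigma)^{d/(d+1)})$; the hypothesis $d > 1$ is exactly what ensures that the $\sqrt{T}$ concentration slack is of lower order than $T^{d/(d+1)}$. Dividing by $T$ yields an average-loss bound of order $T^{-1/(d+1)} \sigma^{-d/(d+1)}$, which equals $(T/\sigma)^{-1/(d+1)} \cdot \sigma^{-1}$. For any fixed $\sigma > 0$ one can write $\sigma^{-1} = (T/\sigma)^{\epsilon(T,\sigma)}$ where $\epsilon(T,\sigma) = \log(1/\sigma)/\log(T/\sigma) \to 0$ as $T \to \infty$, folding the extra factor into the exponent and producing the stated form $(T/\sigma)^{(-1+o(1))/(d+1)}$.

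The main obstacle is the \emph{time-uniform} character of the conclusion: the optimal scale $r$ depends on the horizon $T$, so I cannot commit to a single $r$ in advance. My remedy is to instantiate the Freedman inequality on a dyadic grid of scales $r \in \{2^{-k}\}$ and horizons $T \in \{2^j\}$, union-bounding over the $O(\log^2 T)$ resulting events with a budget of $p/(jk)^2$. The resulting polylogarithmic overhead is itself absorbed into the $o(1)$ in the exponent, and interpolating to non-grid values of $(r,T)$ loses only a constant factor. The deterministic case (b) bound holds at every scale simultaneously, contributing no further union-bound cost.
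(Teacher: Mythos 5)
Your proof is correct and follows essentially the same scale-balancing approach as the paper (Theorem~\ref{thm:nn-conv-rate} combined with Propositions~\ref{prop:geometric-boundary} and \ref{prop:nu-upper-bound}, specialized as in Theorem~\ref{thm:conv-nn-sigma}): a tunable scale $r$, a martingale concentration bound for the near-boundary mistakes, a deterministic covering bound for the remaining mistakes, and a union bound for time-uniformity. Two differences are worth noting. First, you bypass the paper's mutually-labeling set machinery by decomposing mistakes into ``$x_t$ is within $r$ of the boundary'' or ``the nearest previous sample is farther than $r$,'' and you bound the second category by the ambient $r$-covering number of the unit ball, $O(r^{-d})$. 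The paper instead covers only $V_r = \{m_c \geq r\}$ by mutually-labeling balls whose radii scale with the local margin (Proposition~\ref{prop:ml-n-upper-bound}), which yields a covering count of order $r^{-d(\partial\mathcal{X})+o(1)}$ in terms of the box-counting dimension of the boundary; for a finite-Minkowski-content boundary that dimension is at most $d-1$, so the paper's Theorem~\ref{thm:conv-nn-sigma} is strictly sharper. Your coarser ambient count is exactly what the simplified statement you are asked to prove requires, so it suffices here, but it would not recover the general rate. Second, your dyadic $(r,T)$ grid with Freedman is a more explicit (but equivalent) alternative to the paper's union bound over $T$ with budget $p/2T^2$, where $V_r$ is allowed to depend deterministically on $T$ via the optimized $r_T^*$. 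Finally, you correctly identify and absorb the residual $\sigma^{-1}$ factor into the $o(1)$ of the exponent, which is the one subtle piece of bookkeeping in passing from the Theorem~\ref{thm:conv-nn-sigma}-style rate to the stated form.
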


\subsection{Related works}
The 1-nearest neighbor rule \citep{fix1951discriminatory} was shown by \cite{cover1967nearest} to be consistent when the instances come i.i.d. under \Cref{ass:boundary}. On the other hand, in the online learning setting where the sequence of instances can be arbitrary \citep{littlestone1988learning,cesa2006prediction}, there is no learning algorithm that can achieve sublinear regret in the worst-case even in the case of learning a threshold function. However, worst-case analyses of algorithms can fail to explain the observed behavior of algorithms, especially if hard instances are extremely rare in practice \citep{spielman2009smoothed, roughgarden2021beyond}. This motivates the smoothed analysis of algorithms, first introduced by \cite{spielman2004smoothed}. The setting of smoothed online learning was first studied by \cite{rakhlin2011online}, and has recently been followed up by a series of work \citep[and references therein]{haghtalab2020smoothed,haghtalab2022smoothed,block2022smoothed}. Our work fills in the gap between the i.i.d.\@ and worst-case analysis of nearest neighbor, while also giving the first convergence result in smoothed non-parametric online learning. \Cref{app:related-works} further expands on related works.

\section{Preliminaries} \label{sec:prelims}
In \ref{protocol:online-learning}, the learner incrementally updates its prediction rule as it receives more data. It does so according to a prediction strategy $\mathcal{A}$, which constructs each subsequent hypothesis $h_{t+1}: \mathcal{X} \to \mathcal{Y}$ based on previously seen data:
\begin{equation*} 
\mathcal{A} : \big\{(x_\tau, y_\tau)\big\}_{\tau=1}^t \mapsto h_{t+1}.
\end{equation*}
Suppose that $c$ is the underlying concept to be learned. Then, every hypothesis $h : \mathcal{X} \to \mathcal{Y}$ induces an error function $\mathcal{E}: \mathcal{X} \to \mathbb{R}$, which is the loss that $h$ achieves at any particular instance $x$,
\[\mathcal{E}(x) := \ell\big(x, c(x), h(x)\big)\]
When the prediction strategy $\mathcal{A}$ and concept $c$ are clear from context, it shall be fruitful to let $\mathcal{E}_t$ be the associated error function to $h_t$ generated by $\mathcal{A}$. Rewriting \Cref{eqn:learning}, we say that the strategy $\mathcal{A}$ learns if it achieves a vanishing error rate:
\[\mathrm{average\ loss}_T = \frac{1}{T} \sum_{t=1}^T \mathcal{E}_t(x_t) \to 0.\]

\subsection{Online local consistency}
We introduce the \emph{online local consistency} (OLC) condition for learning against dominated adversaries. This is a condition that depends on both the learning algorithm and the concept to be learned. 

For intuition, let $\mathcal{X}$ be composed of (countably many) known clusters, and suppose that we are guaranteed that points in the same cluster have the same label. A natural learning algorithm is to remember a single label from each cluster, and to return that label if a point from the same cluster is queried. In this setting, the learner makes at most one mistake per cluster. If $\nu$ is a finite measure over $\mathcal{X}$, then over time, a $\nu$-dominated adversary will find it increasingly harder to pick points from previously unseen clusters; the mistake rate will eventually converge to zero.

We generalize these easily-learned clusters through the notion of \emph{locally-learned sets} for a learner. In the following, if $U \subset \mathcal{X}$ is a locally-learned set, we can think of the online learning problem restricted to $U$ as easy for the learner: no matter what sequence of points an adversary chooses, the learner will eventually incur arbitrarily small loss from $U$.

\begin{definition}[Locally-learned set]
Let $c : \mathcal{X} \to \mathcal{Y}$ be a concept. We say that $c$ is \emph{locally learned} on a subset $U \subset \mathcal{X}$ by the prediction strategy $\mathcal{A}$ when, for any sequence of instances $(x_t)_t$, either:
\begin{enumerate}
    \item[(i)] $x_t$ falls into $U$ finitely often, or
    \item[(ii)] the error function $\mathcal{E}_t\big|_U \to 0$ restricted to $U$ uniformly converges to zero.
\end{enumerate}
In this case, we say that $U$ is a \emph{locally-learned set} for $c$.
\end{definition}

For example, singleton sets are locally-learned by \emph{consistent learners}, which are learners that exactly interpolate past data. But in general, if $\mathcal{X}$ is uncountable, this family of locally-learned sets is too granular to work with, as the family also becomes uncountably large. The OLC condition ensures that there is a way to cut up the problem into a countable collection of `easy' problems.

\begin{definition}[Online local consistency]
A prediction strategy $\mathcal{A}$ is \emph{online locally consistent (OLC)} for a concept $c$ if there exists a countable collection $\mathcal{U}_c := \{U_n\}_n$ of locally learned sets for $c$ that covers all but a $\nu$-negligible subset of $\mathcal{X}$. 
\end{definition}

The argument for why an OLC learner can perform well against a dominated adversary is not unlike the earlier example of learning labels for pure clusters. We can restrict the learning problem to a finite collection of locally-learned sets that covers all but a small part of $\mathcal{X}$. Because the part of $\mathcal{X}$ we covered consists only of finitely many easy learning problems, the learner's error rate will eventually converge to zero here. The uncovered portion of $\mathcal{X}$ can be made sufficiently small so that its contribution to the error rate is made arbitrarily small---the adversary cannot test the learner with instances from this region very frequently because it is $\nu$-dominated.

\subsection{Mutually-labeling sets}

For the analysis of nearest neighbor, we introduce the notion of a \emph{mutually-labeling set}. It is a set defined so that, upon receiving a label for any point within the set, the nearest neighbor learner will never make a subsequent mistake on any other point in that set (see \Cref{fig:mutually-labeling}).
\begin{definition}[Mutually-labeling set]
A set $U \subset \mathcal{X}$ is a \emph{mutually-labeling set} for a concept $c$ if:
\[\phantom{,\qquad \forall x, x' \in U.}\rho(x,x') < m_c(x),\qquad \forall x, x' \in U.\]
\end{definition}

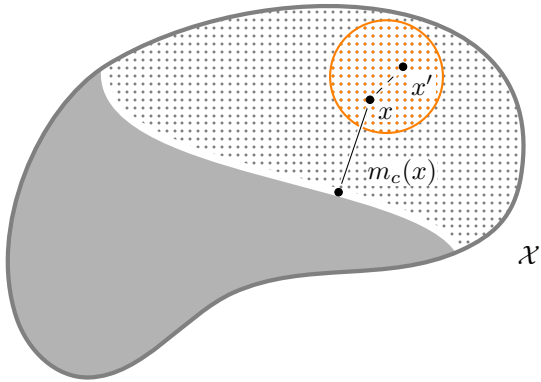
\begin{figure}[ht]
    \centering
    \begin{tikzpicture}[y=0.50pt, x=0.50pt,yscale=-2.5, xscale=2.5, inner sep=0pt, outer sep=0pt, trim left = 2cm, 
point/.style = {circle, draw=black, inner sep=1, outer sep=2,
                fill=black,
                node contents={}}]

  \path[fill=cb3b3b3] 
  (54.9174,61.7760) .. controls (34.8042,74.5615) and (18.9199,110.7493) .. (28.0229,137.9947) .. controls (31.3683,148.0074) and (39.9622,156.3286) .. (49.8747,156.1061) .. controls (66.2536,155.7384) and (79.0581,136.9671) .. (93.5782,130.4483) .. controls (115.6379,120.5446) and (140.3069,127.6766) .. 
  (162.4953,117.6194) .. controls (150,95) and  (50,95) ..
  (54.9174,61.7760) -- cycle;

  \path[pattern={dots}, pattern color=gray]   
  (162.4953,117.6194) .. controls (172.1698,113.2343) and (178.6216,107.5658) .. (180.6491,95.3575) .. controls (190.9266,33.4721) and (99.8417,33.6483) .. (54.9174,61.7760) .. controls (50,95) and (150,95) .. (162.4953,118.6194) -- cycle;

  \path[draw=white, line cap=butt,line join=miter,line width=3pt] 
  (54.9174,61.7760) .. controls (50,95) and (150,95) .. (162.4953,118.6194);

  \path[draw=gray,ultra thick]
  (54.9174,61.7760) .. controls (34.8042,74.5615) and (18.9199,110.7493) .. (28.0229,137.9947) .. controls (31.3683,148.0074) and (39.9622,156.3286) .. (49.8747,156.1061) .. controls (66.2536,155.7384) and (79.0581,136.9671) .. (93.5782,130.4483) .. controls (115.6379,120.5446) and (140.3069,127.6766) .. (162.4953,117.6194) .. controls (172.1698,113.2343) and (178.6216,107.5658) .. (180.6491,95.3575) .. controls (190.9266,33.4721) and (99.8417,33.6483) .. (54.9174,61.7760) -- cycle;

  \node [label=$\mathcal{X}$] at (183, 122) {};

  \node[circle, draw=orange, thick, pattern={dots}, pattern color=orange, minimum size=1.5cm] (c) at (140,65) {};

  \node (x) at (135, 72)  [point=black];
  \node (y) at (145, 62)  [point=black];
  \node at (x) [inner sep=1pt, outer sep=2pt,fill=white,below right] {$x$};
  \node at (y) [inner sep=1pt,outer sep=2pt,fill=white,below right] {$x'$};
  \node (boundary) at (125.5, 100) [point=black];
  \draw[color=white, ultra thick] (x) -- (y);
  \draw[color=white, ultra thick] (x) -- (boundary);
  \draw[color=black, dashed] (x) -- (y);
  \draw[color=black] (x) -- node[outer sep=5pt,fill=white,below right] {$m_c(x)$} (boundary);

\end{tikzpicture}
    \caption{The instance space $\mathcal{X}$ is divided into two classes, the solid region in the lower left and the dotted region in the upper right. The orange ball is an example of a mutually-labeling set. Suppose \ref{alg:online-nnc} previously received the label for $x'$. Then, it shall always classify $x$ correctly in the future; $x$ can never have a nearer neighbor of a different class.}
    \label{fig:mutually-labeling}
\end{figure}

Naturally, mutually-labeling sets are locally learned (\Cref{lem:mlp-ll}). The proof of convergence for OLC learners using locally-learned sets generalizes the following proof sketch for \ref{alg:online-nnc}:

\paragraph{Proof Sketch of \Cref{thm:conv-nn}} For simplicity, let's assume a stronger boundary condition: the set of boundary points of $c$ has $\nu$-measure zero. It turns out that if $x$ is not a boundary point, then sufficiently small open balls centered at $x$ are mutually-labeling sets (see Lemma~\ref{lem:ball-mutually-labeling}). Thus, $\mathcal{X}$ is covered almost everywhere by open mutually-labeling sets. By separability of $\rho$ and finiteness of $\nu$, all but an arbitrarily small region of $\mathcal{X}$ can be covered by a finite number of such sets.

Because the \ref{alg:online-nnc} learner makes at most one mistake on each mutually-labeling set, eventually all mistakes must come from the uncovered hard region. The average rate at which a $\nu$-dominated adversary can test the learner with these hard instances can almost surely be bounded above by any $\epsilon > 0$, by selecting a sufficiently small hard region for our analysis. Thus, the average loss converges to zero almost surely, by the law of large numbers for martingales. \hfill $\blacksquare$

\section{Convergence of OLC learners} \label{sec:olc-conv}
\begin{theorem}[Convergence of error rate] \label{thm:error-rate-convergence}
Given an \ref{protocol:online-learning} problem on the measure space $(\mathcal{X}, \nu)$ where $\nu$ is a finite measure. Suppose the learner is online locally consistent with respect to $c$ and that the adversary is $\nu$-dominated. Then, the learner's error rate converges:
\[\phantom{\quad\mathrm{a.s.}}\lim_{T\to\infty} \, \frac{1}{T} \sum_{t=1}^T \ell(x_t, y_t, \hat{y}_t) = 0\quad \mathrm{a.s.}\]
\end{theorem}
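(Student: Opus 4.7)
The plan is to decompose the instance space into finitely many locally-learned sets plus a small residual ``hard'' region, then bound the average loss separately on each piece, finally shrinking the hard region via $\epsilon \to 0$. Fix $\epsilon > 0$ and let $L$ be an upper bound on $\ell$. By uniform domination, choose $\delta > 0$ such that $\nu(A) < \delta$ implies $\mu(A) < \epsilon$ for every $\mu$ in the adversary's family. Let $\mathcal{U}_c = \{U_n\}_n$ be the countable collection of locally-learned sets covering $\mathcal{X}$ up to a $\nu$-null set. Since $\nu$ is finite, continuity of measure applied to the increasing family $V_N := \bigcup_{n \leq N} U_n$ yields some $N$ with $\nu(H) < \delta$, where $H := \mathcal{X} \setminus V_N$; hence $\mu_t(H) < \epsilon$ for every $t$. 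Decompose
\[\frac{1}{T}\sum_{t=1}^T \mathcal{E}_t(x_t) \;=\; \frac{1}{T}\sum_{t=1}^T \mathcal{E}_t(x_t)\ind\{x_t \in H\} \;+\; \frac{1}{T}\sum_{t=1}^T \mathcal{E}_t(x_t)\ind\{x_t \in V_N\}.\]

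For the hard region, let $\mathcal{F}_{t-1}$ be the $\sigma$-algebra generated by the first $t{-}1$ rounds of interaction, so that the adaptively chosen $\mu_t$ is $\mathcal{F}_{t-1}$-measurable. Then $Z_t := \ind\{x_t \in H\}$ satisfies $\E[Z_t \mid \mathcal{F}_{t-1}] = \mu_t(H) \leq \epsilon$, and $Z_t - \mu_t(H)$ is a bounded martingale-difference sequence. Azuma--Hoeffding combined with Borel--Cantelli gives $\frac{1}{T}\sum_{t=1}^T (Z_t - \mu_t(H)) \to 0$ almost surely, so $\limsup_T \frac{1}{T}\sum_t Z_t \leq \epsilon$ a.s.; multiplying by $L$ bounds the hard-region contribution's limsup by $L\epsilon$ a.s.

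For the covered region, I use the locally-learned property on each $U_n$ with $n \leq N$: for every realized sequence, either $x_t \in U_n$ only finitely often (so the running average contribution from $U_n$ deterministically tends to $0$) or $\sup_{x \in U_n} \mathcal{E}_t(x) \to 0$. In the latter case, for any $\eta > 0$ pick $T_0$ with $\sup_{x \in U_n} \mathcal{E}_t(x) < \eta$ whenever $t \geq T_0$; then
\[\frac{1}{T}\sum_{t=1}^T \mathcal{E}_t(x_t)\ind\{x_t \in U_n\} \;\leq\; \frac{L T_0}{T} + \eta,\]
whose limsup is at most $\eta$; sending $\eta \to 0$ gives limit $0$. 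Using $\ind\{x_t \in V_N\} \leq \sum_{n \leq N} \ind\{x_t \in U_n\}$ and summing over the finitely many $n \leq N$, the covered-region contribution tends to $0$ almost surely.

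Combining, $\limsup_T \frac{1}{T}\sum_{t=1}^T \mathcal{E}_t(x_t) \leq L\epsilon$ almost surely. Running $\epsilon$ along the countable sequence $\{1/k\}_{k \in \mathbb{N}}$ and intersecting the resulting almost-sure events yields that this limsup is $0$ a.s., as required. The main obstacle I expect is the hard-region argument, which rests on two points that need careful verification: first, that the uniform-domination bound applies pointwise in time even though $\mu_t$ is adversarially chosen from history (so that $\mu_t(H) \leq \epsilon$ truly holds round-by-round), and second, that the resulting indicator sequence forms a bounded martingale-difference process for which a strong law is available. Everything else is a bookkeeping argument in the countable cover plus uniform convergence on locally-learned sets.
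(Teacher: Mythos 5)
Your proof is correct and follows essentially the same route as the paper: fix $\epsilon$, choose $\delta$ via domination, build $V$ as a finite union of locally-learned sets with $\nu(V^c)<\delta$ by continuity of measure, bound the covered part via the locally-learned property and the uncovered part via a martingale strong law, then intersect over a countable sequence $\epsilon \to 0$. The only minor differences are cosmetic: you derive the strong law directly from Azuma--Hoeffding plus Borel--Cantelli where the paper cites a packaged martingale SLLN, and you drive the covered-region contribution all the way to $0$ (via the uniform convergence on each $U_n$) whereas the paper is content with a $<\epsilon$ bound there.
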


Before commencing the proof, recall that $\mathcal{E}_t(x_t)$ is the error incurred by the learner at time $t$. Given error function $\mathcal{E}$ and test distribution $\mu$, let's also define the notation $\mathcal{E}(\mu)$ to be the expected error,
\[\mathcal{E}(\mu) := \E_{x \sim\mu}[\mathcal{E}(x)].\]
If $A \subset \mathcal{X}$ is measurable, let $\mathcal{E}\ind_A$ denote the pointwise product of $\mathcal{E}$ and the indicator on $A$.

\paragraph{Proof of \Cref{thm:error-rate-convergence}}
We show that for any $\epsilon > 0$, the following error rate bound holds:
\begin{equation} \label{eqn:error-convergence}
\phantom{\textrm{a.s.} \quad}\lim_{T \to \infty}\, \frac{1}{T} \sum_{t=1}^T \mathcal{E}_t(x_t) < 2\epsilon \quad \textrm{a.s.}
\end{equation}
If so, then this statement holds simultaneously for any countable sequence of $\epsilon$ converging to zero, implying that the error rate converges to zero almost surely. 

To prove \Cref{eqn:error-convergence}, fix $\epsilon > 0$. Because the loss function is bounded above, say by $C > 0$, we have for any error function $\mathcal{E}$ and any measurable $A \subset \mathcal{X}$,
\[(\mathcal{E} \ind_A)(\mu) \leq (C \ind_A)(\mu) = C \cdot \mu(A).\]
The right-hand side can be bounded in terms of $\nu(A)$ whenever $\mu$ is chosen by a $\nu$-dominated adversary. In particular, we may select $\delta > 0$ such that:
\begin{equation} \label{eqn:risk-bound}
\nu(A) < \delta \quad \Longrightarrow \quad (\mathcal{E}\ind_A)(\mu) < \epsilon.
\end{equation}
Let us do so: any region whose $\nu$-mass is less than $\delta$ contributes no more than $\epsilon$ to the error rate.

We claim that there exists a subset $V \subset \mathcal{X}$ with the properties that (a) there exists a random time $T_\epsilon$ such that the learner incurs less than $\epsilon$ error for any further instance $x_t$ that lands in $V$,
\[\phantom{,\qquad \forall t \geq T_\epsilon \textrm{ and } x_t \in V.}\mathcal{E}_t(x_t) < \epsilon,\qquad \forall t > T_\epsilon \textrm{ and } x_t \in V,\]
and that (b) $V$ covers all but a $\delta$-mass of $\mathcal{X}$, so that $\nu(V^c) < \delta$. Assume this for now---we decompose $\mathcal{E}_t$ into its pieces on $V$ and $V^c$, with $\mathcal{E}_t = \mathcal{E}_t \ind_{V} + \mathcal{E}_t \ind_{V^c}$. We have:
\begin{itemize}
    \item[-] By property (a) of $V$, the sequence $(\mathcal{E}_t \ind_V) (x_t)$ eventually remains less than $\epsilon$, in particular when we have $t > T_\epsilon$. Because $T_\epsilon$ is almost surely finite, we have that:
    \begin{equation} \label{eqn:V-inequality}
    \lim_{T\to\infty} \, \frac{1}{T}\sum_{t=1}^T (\mathcal{E}_t\ind_V)(x_x) < \epsilon.
    \end{equation}
    \item[-] By property (b) of $V$, the mass of $V^c$ is less than $\delta$. \Cref{eqn:risk-bound} implies:
    \[(\mathcal{E}_t \ind_{V^c})(\mu_t)< \epsilon.\]
    By the law of large numbers for martingales (\Cref{thm:slln}), this implies that almost surely:
    \begin{equation} \label{eqn:Vc-inequality}
    \lim_{T\to\infty} \, \frac{1}{T} \sum_{t=1}^T (\mathcal{E}_t \ind_{V^c})(x_t) = \lim_{T\to\infty} \, \frac{1}{T} \sum_{t=1}^T (\mathcal{E}_t \ind_{V^c})(\mu_t) < \epsilon.
    \end{equation}
\end{itemize}
Because the loss function is bounded, the error rates within the limits in \Cref{eqn:V-inequality,eqn:Vc-inequality} are also bounded. Thus, we can sum the two equations and apply dominated convergence, interchanging limits and sum, to yield \Cref{eqn:error-convergence}.

To finish the proof, we show that $V$ exists. The learner is OLC, so there is a countable cover $\{U_n\}_{n \in \mathbb{N}}$ of locally learned sets for $\mathcal{X}$ almost everywhere. Let $V$ satisfying $\nu(V^c) < \delta$ be chosen as a finite union:
\[V := \bigcup_{i=1}^N U_n.\]
Such a $N < \infty$ exists by the continuity of measure, since $\bigcup_{n=1}^\infty U_n$ is essentially all of $\mathcal{X}$. 

By now, we have constructed $V$ in such a way such that property (b) holds. To show property (a), we use the fact that each $U_n$ is locally learned: either (i) $(x_t)_t$ eventually never returns to $U_n$, which is to say that $\ind\{x_t \in U_n\}$ converges to zero over time, or (ii) for sufficiently large $t$, $\mathcal{E}_t\big|_{U_n} < \epsilon$. Thus, almost surely, there exists some $T_{n}$ such that for all $t > T_{n}$,
\[\mathcal{E}_t(x_t) \cdot \ind\{x_t \in U_n\} < \epsilon.\]
Property (a) follows by defining $T_\epsilon := \max\{T_1,\ldots, T_N\}$.
\hfill $\blacksquare$

\section{Nearest neighbor is an OLC learner} \label{sec:nnc-olc}
\begin{theorem}[Nearest neighbors is OLC] \label{thm:nn-olc}
Let $(\mathcal{X}, \rho, \nu)$ be a metric measure space, where $\rho$ is a separable metric and $\nu$ is a finite Borel measure. If $c$ is a concept whose boundary points satisfy Assumption~\ref{ass:boundary}, then \ref{alg:online-nnc} is OLC with respect to $c$.
\end{theorem}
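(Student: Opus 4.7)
The plan is to produce the countable cover $\mathcal{U}_c$ required by the OLC definition in two pieces: (a) a countable family of mutually-labeling open balls covering the interior $\mathcal{X}\setminus\partial\mathcal{X}$, and (b) one singleton per boundary point in the countable part of $\partial\mathcal{X}$. The $\nu$-null part of the boundary is automatically discarded, since OLC only requires a cover up to a $\nu$-null set. The two structural lemmas signposted in the proof sketch of \Cref{thm:conv-nn} --- that small balls around non-boundary points are mutually labeling (\Cref{lem:ball-mutually-labeling}) and that mutually-labeling sets are locally learned (\Cref{lem:mlp-ll}) --- will do the substantive work.

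For step (a), I would first verify the ball-mutually-labeling claim. Fix $x \notin \partial\mathcal{X}$ and set $r := m_c(x)/3 > 0$. For any $x' \in B(x,r)$, one has $c(x') = c(x)$, for otherwise $x'$ itself would witness $m_c(x) \leq \rho(x,x') < r$, absurd. Applied again, for any $y$ with $c(y) \neq c(x') = c(x)$ the reverse triangle inequality gives $\rho(x',y) \geq m_c(x) - r = 2r$, so $m_c(x') \geq 2r$. Hence for any pair $x', x'' \in B(x,r)$, $\rho(x',x'') < 2r \leq m_c(x')$, making $B(x,r)$ mutually labeling. The family $\{B(x, m_c(x)/3) : x \in \mathcal{X}\setminus\partial\mathcal{X}\}$ is then an open cover of $\mathcal{X}\setminus\partial\mathcal{X}$; since $\rho$ is separable, $(\mathcal{X},\rho)$ is second countable and hence Lindel\"of, so a countable subcover $\{B_n\}_{n \in \mathbb{N}}$ exists.

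For step (b), decompose $\partial\mathcal{X} = D \cup N$ via \Cref{ass:boundary}, with $D$ countable and $\nu(N) = 0$. Each singleton $\{x\}$ with $x \in D$ is locally learned directly: if $x_t = x$ never occurs, the first clause of the definition is satisfied; if $x_{t_0} = x$ does occur, then for every $t > t_0$ the nearest neighbor of $x$ among $x_1, \dots, x_{t-1}$ lies at distance $0$, so (regardless of how ties are broken) its label must be $c(x)$, giving $\mathcal{E}_t(x) = 0$ and hence uniform convergence on $\{x\}$. The countable union $\mathcal{U}_c := \{B_n\}_n \cup \{\{x\} : x \in D\}$ is then a collection of locally-learned sets whose union is $\mathcal{X} \setminus N$, establishing the OLC property.

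The only step I expect to require genuine care is the handling of $D$: singletons centered at boundary points are \emph{not} mutually labeling (there $m_c(x) = 0$), so the ball-based cover alone would miss them and the direct singleton argument above is needed to cover them. Everything else --- the triangle-inequality computation for the ball lemma and the Lindel\"of property of separable metric spaces --- is routine bookkeeping.
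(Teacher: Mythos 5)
Your proposal is correct and matches the paper's proof essentially line-for-line: the same decomposition into a countable subcover of $\mathcal{X}\setminus\partial\mathcal{X}$ by mutually-labeling balls (via Lemma~\ref{lem:ball-mutually-labeling} together with separability) plus singletons for the countable part of $\partial\mathcal{X}$, discarding the $\nu$-null remainder. The only difference is cosmetic: you inline the triangle-inequality argument behind Lemma~\ref{lem:ball-mutually-labeling} and the observation that singletons are locally learned because nearest neighbor is a consistent learner, whereas the paper cites these as separate lemmas/remarks.
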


To show that \ref{alg:online-nnc} is OLC, we need to prove that any concept $c$ with essentially countable boundary also has a countable family of locally-learned sets. 

We define two types of locally-learned sets for \ref{alg:online-nnc}: singleton sets for the boundary points and mutually-labeling sets for everything else. Recall that mutually-labeling sets $U$ satisfy:
\[\phantom{,\qquad \forall x,x' \in U.}\rho(x,x') < m_c(x),\qquad \forall x,x' \in U,\]
where $m_c(x)$ is the margin between $x$ and the boundary of $c$. Note that all points in $U$ share the same label. If this weren't the case, then there would exist $x, x' \in U$ with different labels such that:
\[\rho(x,x') < \underbrace{\inf_{c(x) \ne c(\tilde{x})}\, \rho(x, \tilde{x})}_{m_c(x)} \leq \rho(x,x'),\]
a contradiction. The following lemma further shows that these are locally learned sets:

\begin{lemma}[Mutually labeling property] \label{lem:mlp-ll}
Consider learning the concept $c$ via the \ref{alg:online-nnc} rule. If $U$ is a mutually-labeling set for $c$ and $x_t \in U$, then for all time $\tau > t$, the predictor $h_\tau$ is correct on all of $U$. Thus, $U$ is locally learned.
\end{lemma}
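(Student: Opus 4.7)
The plan is to show the stronger statement that once \ref{alg:online-nnc} has stored any single point from $U$, its predictions agree with $c$ on all of $U$ from then on; local learnedness then falls out immediately from the definition.

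First I fix an arbitrary $\tau > t$ and any $x' \in U$, and let $x_s$ denote a nearest neighbor of $x'$ among the stored points $x_1,\ldots,x_{\tau-1}$, so that $h_\tau(x') = y_s = c(x_s)$. Because $x_t$ is among the stored points, $x_s$ is at least as close to $x'$ as $x_t$ is, so $\rho(x', x_s) \leq \rho(x', x_t)$. The mutually-labeling property, applied to the pair $x', x_t \in U$, then gives $\rho(x', x_t) < m_c(x')$, and hence $\rho(x', x_s) < m_c(x')$.

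Next I use the definition of the margin to convert this distance bound into a label equality. By definition $m_c(x') = \inf_{c(\tilde{x}) \ne c(x')} \rho(x', \tilde{x})$, so any point strictly within distance $m_c(x')$ of $x'$ must share its label. Applying this to $x_s$ yields $c(x_s) = c(x')$, and therefore $h_\tau(x') = c(x')$, i.e.\ $\mathcal{E}_\tau(x') = 0$. Since $x' \in U$ was arbitrary, $\mathcal{E}_\tau|_U \equiv 0$ for every $\tau > t$.

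Finally, to conclude that $U$ is locally learned, I examine a generic sequence $(x_t)_t$: either no $x_t$ ever lies in $U$, in which case case (i) of the definition holds vacuously, or else there is a first index $t_0$ with $x_{t_0} \in U$, in which case the argument above gives $\mathcal{E}_\tau|_U \equiv 0$ for all $\tau > t_0$, so $\mathcal{E}_\tau|_U \to 0$ uniformly, establishing case (ii). There is no real obstacle here: the whole argument is a direct unpacking of the mutually-labeling inequality together with the definition of $m_c$; the only thing one must be careful about is applying that inequality with the query point playing the role of $x$ (so that $m_c$ is evaluated at the query point, which is exactly what controls whether the nearest neighbor is on the correct side of the decision boundary).
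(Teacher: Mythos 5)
Your proof is correct and follows essentially the same argument as the paper: the stored point $x_t \in U$ guarantees that the nearest neighbor of any $x' \in U$ at time $\tau > t$ lies within distance $\rho(x', x_t) < m_c(x')$ of $x'$, hence has the same label. You spell out the final step (converting exactness to local learnedness via the definition) a bit more explicitly than the paper, but the core reasoning is identical.
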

\begin{proof}
Let $x \in U$ so that $c(x) = c(x_t)$. When $\tau > t$, the nearest neighbor classifier errs on $x$ only if the closest point to $x$ among $x_1,\ldots, x_\tau$ is of the opposite class. But this is impossible since the closest point must be no more than a distance of $\rho(x,x_t)$ and $U$ is mutually labeling.
\end{proof}

Sufficiently small balls around any non-boundary point $x$ are mutually-labeling sets.
\begin{lemma}[Mutually labeling balls] \label{lem:ball-mutually-labeling}
Let $c : \mathcal{X} \to \mathcal{Y}$ be a concept, and suppose that $x$ has positive margin $m_c(x) > 0$. Then, the open ball $B\big(x, m_c(x)/3\big)$ is mutually labeling.
\end{lemma}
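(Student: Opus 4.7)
The plan is to unfold the definition of a mutually-labeling set and verify it for $U = B(x, m_c(x)/3)$ using nothing more than the triangle inequality. Writing $r := m_c(x)/3$, we need to show that for every pair $y, y' \in U$ we have $\rho(y,y') < m_c(y)$. The triangle inequality immediately gives the strict bound $\rho(y,y') \leq \rho(y,x) + \rho(x,y') < 2r = 2m_c(x)/3$, so the task reduces to showing $m_c(y) \geq 2m_c(x)/3$ for every $y \in U$.

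To bound $m_c(y)$ from below, I first observe that every $y \in U$ must satisfy $c(y) = c(x)$: since $\rho(x,y) < m_c(x)/3 < m_c(x)$, the definition of $m_c(x)$ as the infimum distance from $x$ to differently-labeled points forces $c(y) = c(x)$. Then for any $z$ with $c(z) \neq c(y) = c(x)$, the reverse triangle inequality gives
\[\rho(y,z) \geq \rho(x,z) - \rho(x,y) > m_c(x) - m_c(x)/3 = 2m_c(x)/3.\]
Taking the infimum over such $z$ yields $m_c(y) \geq 2m_c(x)/3$. Combining this with the earlier bound on $\rho(y,y')$ gives $\rho(y,y') < 2m_c(x)/3 \leq m_c(y)$, which is exactly the mutually-labeling condition.

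There is no real obstacle here; the choice of denominator $3$ is precisely what provides enough slack to absorb the reverse triangle inequality cleanly while keeping the pair distance $\rho(y,y')$ strictly below the worst-case margin $m_c(y)$. The only subtlety worth flagging is the need to argue $c(y) = c(x)$ before invoking the lower bound on $m_c(y)$, which is what ensures that differently-labeled points $z$ from $y$ are also differently labeled from $x$, so that $\rho(x,z) \geq m_c(x)$ is available.
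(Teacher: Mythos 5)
Your proof is correct and follows the same route as the paper: triangle inequality to bound $\rho(y,y')$ by $2m_c(x)/3$, then reverse triangle inequality to lower-bound $m_c(y)$ by $2m_c(x)/3$ after noting all points in the ball share $x$'s label. The only difference is cosmetic — you explicitly justify $c(y) = c(x)$ via $\rho(x,y) < m_c(x)$, which the paper's proof asserts without comment; this is a small but welcome clarification.
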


\begin{proof}
Let $x_1, x_2 \in B\big(x, m_c(x)/3\big)$. By the triangle inequality,
\[\rho(x_1, x_2) \leq \rho(x_1, x) + \rho(x, x_2) < 2m_c(x) /3.\]
We also know for $i \in \{1,2\}$ and for all $\tilde{x}$ that $\rho(x_i, \tilde{x}) \geq \rho(x, \tilde{x}) - \rho(x_i, x)$, by the reverse triangle inequality. Since $c(x_i) = c(x)$, we take infimums on both sides over $\tilde{x}$ where $c(\tilde{x}) \ne c(x)$, so:
\[\underbrace{\inf_{c(x_i) \ne c(\tilde{x})}\, \rho(x_i, \tilde{x})}_{m_c(x_i)} \geq \underbrace{\inf_{c(x) \ne c(\tilde{x})}\, \rho(x,\tilde{x})}_{m_c(x)} \, -\, \rho(x_i, x) \geq 2m_c(x) / 3.\]
This implies that $\rho(x_1, x_2) < m_c(x_1)$, so that $B\big(x, m_c(x)/3\big)$ is mutually labeling.
\end{proof}

\paragraph{Proof of \Cref{thm:nn-olc}}
Given a concept $c$ with essentially countable boundary, we construct a countable cover of $\mathcal{X}$ except for a $\nu$-measure zero set by locally-learned sets of $c$. 

Let us denote by $\partial \mathcal{X}$ the set of boundary points $\{x : m_c(x) = 0\}$. By Lemma~\ref{lem:ball-mutually-labeling}, non-boundary points $\mathcal{X} \setminus \partial \mathcal{X}$ can be covered by the family of open mutually-labeling sets,
\[\big\{B(x, m_c(x)/3) : x \in \mathcal{X} \setminus \partial \mathcal{X}\big\}.\] 
By the separability of $\mathcal{X}$, there is a countable subcover of $\mathcal{X} \setminus \partial \mathcal{X}$ by mutually-labeling sets. These are locally-learned sets, by Lemma~\ref{lem:mlp-ll}.

As for the boundary points, the set $\partial\mathcal{X}$ is essentially countable $\partial \mathcal{X} = \mathcal{N} \cup \mathcal{Z}$, where $\mathcal{N}$ is countable and $\mathcal{Z}$ is $\nu$-measure zero. Then, each $\{x\}$ for $x \in \mathcal{N}$ is a locally-learned set because nearest neighbors is a consistent learner. Together, these two collections of locally-learned sets is a countable cover of all of $\mathcal{X}$ except for a measure zero set; thus, the \ref{alg:online-nnc} is OLC.
\hfill $\blacksquare$

\section{Rates of convergence for nearest neighbor} \label{sec:rates}
Rates of convergence for \ref{alg:online-nnc} arise almost immediately out of the proof technique for asymptotic convergence. Recall that the proof technique consisted of decomposing $\mathcal{X}$ into $V$ and $V^c$, where (i) $V$ can be covered by finitely many mutually-labeling sets and (ii) $V^c$ has small $\nu$-mass. 

The proof can be adapted to yield rates by quantifying (i) the number of mutually-labeling sets required to cover $V$, and (ii) the rate at which a $\nu$-dominated adversary can boost the probability of selecting points from $V^c$. To bound these, we respectively define the following:

\begin{definition}[Mutually-labeling covering number]
    Let $V\subset \mathcal{X}$. The \emph{mutually-labeling covering number} $\mathcal{N}_{\mathrm{ML}}(V)$ given a concept $c$ is the size of a minimal covering of $V$ by mutually-labeling sets.
\end{definition}

\begin{definition}[Smoothness rate]
    An adversary has \emph{smoothness rate} $\epsilon : \mathbb{R}_{\geq 0} \to [0,1]$ whenever all distributions $\mu$ it can select satisfy:
    \[\phantom{\qquad \forall A \subset \mathcal{X} \textrm{ measurable}}\mu(A) \leq \epsilon\big(\nu(A)\big),\qquad \forall A \subset \mathcal{X} \textrm{ measurable}.\]
\end{definition}

An adversary is $\nu$-dominated if $\displaystyle \lim_{\delta \to 0}\, \epsilon(\delta) = 0$. It is $\sigma$-smooth if $\epsilon$ is further $\frac{1}{\sigma}$-Lipschitz.

For simplicity, let us assume that the boundary $\partial \mathcal{X}$ has $\nu$-measure zero. Then, the following mistake rate is obtained by separately counting mistakes on $V$ and $V^c$:
\[\E\big[\# \textrm{mistakes by time } T\big] \leq \min \left\{T\, ,\,  \inf_{V \subset \mathcal{X}} \, \mathcal{N}_{\mathrm{ML}}(V) + T\epsilon\big(\nu(V^c)\big)\right\}.\]
By a standard application of Azuma-Hoeffding's, we can convert this into a high-probability bound:
\begin{theorem}[Convergence rate] \label{thm:nn-conv-rate}
    Let $(\mathcal{X}, \rho, \nu)$ be a metric measure space with separable metric $\rho$ and finite Borel measure $\nu$. Let $c$ be a concept with measure zero boundary. Let the $\nu$-dominated adversary have smoothness rate $\epsilon$. Fix $p > 0$. Then, with probability at least $1 - p$, the following mistake bound holds for \ref{alg:online-nnc} simultaneously for all $T \in \mathbb{N}$ :
    \[\# \mathrm{mistakes}_T \leq \min\left\{T\,,\, \inf_{V \subset \mathcal{X}}\, \mathcal{N}_{\mathrm{ML}}(V) + T \epsilon\big(\nu(V^c)\big) + \sqrt{2T \log \frac{2T}{p}}\right\}.\]
\end{theorem}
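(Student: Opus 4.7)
The plan is to turn the deterministic expected-mistake bound sketched just above the theorem into a high-probability, time-uniform statement via Azuma--Hoeffding. The argument has two real ingredients --- a per-$V$ deterministic split that uses the mutually-labeling lemma, and a martingale concentration of the visit count to $V^c$ --- followed by a union bound over $T$.

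Fix any $V \subset \mathcal{X}$ and let $Y_t := \ind\{\hat{y}_t \ne y_t\}$. Splitting mistakes according to whether $x_t \in V$ or $x_t \in V^c$,
\[\sum_{t=1}^T Y_t \;\le\; \sum_{t=1}^T Y_t\ind\{x_t \in V\} \;+\; \sum_{t=1}^T \ind\{x_t \in V^c\}.\]
The first sum is at most $\mathcal{N}_{\mathrm{ML}}(V)$ deterministically: fix a minimal mutually-labeling cover of $V$, and observe that by Lemma~\ref{lem:mlp-ll} each such set can absorb at most one mistake --- once \ref{alg:online-nnc} has seen any labeled point inside a mutually-labeling set it is correct on every subsequent visit to that set. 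For the second sum, conditioned on the history and on $\mu_t$, $\ind\{x_t \in V^c\}$ has mean $\mu_t(V^c) \le \epsilon(\nu(V^c))$, so the increments $\ind\{x_t \in V^c\} - \mu_t(V^c)$ form a $[-1,1]$-bounded martingale difference sequence. Azuma--Hoeffding then gives
\[\mathbb{P}\!\left(\sum_{t=1}^T \ind\{x_t \in V^c\} \,>\, T\epsilon(\nu(V^c)) + \sqrt{2T\log(1/\delta)}\right) \,\le\, \delta,\]
so for each fixed $V$ and $T$ the stated per-$V$ mistake bound holds with probability $\ge 1-\delta$.

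To upgrade to the infimum-over-$V$ and simultaneous-over-$T$ form, observe that $\mathcal{N}_{\mathrm{ML}}(V) + T\epsilon(\nu(V^c))$ is a deterministic function of $T$ and the problem data $(c,\nu,\epsilon)$; for each $T$ we may fix a (near-)minimizer $V_T^*$ as a deterministic function of $T$ alone and apply the previous Azuma step at $V = V_T^*$, obtaining the $\inf_V$ form. A union bound over $T$ with confidence parameters $\delta_T$ satisfying $\sum_T \delta_T \le p$ then delivers the simultaneous-over-$T$ guarantee, while the $\min$ with $T$ is immediate since no run of length $T$ can contain more than $T$ mistakes.

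The genuinely non-trivial step is the deterministic decomposition, where Lemma~\ref{lem:mlp-ll} turns the combinatorial covering $\mathcal{N}_{\mathrm{ML}}(V)$ into an honest mistake bound. The remaining subtlety is purely technical: the stated $\sqrt{2T\log(2T/p)}$ corresponds to $\delta_T = p/(2T)$, whose sum diverges, so matching the statement exactly requires a time-uniform Azuma (e.g.\ Ville's inequality on an exponential supermartingale, or a dyadic peeling argument) rather than a naive per-$T$ union bound; either device recovers the advertised rate up to absolute constants.
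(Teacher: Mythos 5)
Your decomposition, the use of Lemma~\ref{lem:mlp-ll} to cap the mistakes in $V$ by $\mathcal{N}_{\mathrm{ML}}(V)$, the martingale Azuma step on $\ind\{x_t\in V^c\}$, and the plan of picking a near-minimizer $V_T^*$ deterministically per $T$ before union-bounding are all correct and essentially identical to the paper's argument; you have filled in the ``optimizing $V$'' step slightly more explicitly than the paper does, which is fine.

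The final paragraph, however, raises a false alarm. You back-solved $\sqrt{2T\log(2T/p)}$ against the looser Azuma inequality $\Pr(\sum\xi_t \ge s) \le e^{-s^2/(2T)}$ for $[-1,1]$-bounded differences, which would indeed force $\delta_T = p/(2T)$ and a divergent union bound. But the paper instead takes $\delta_T = p/(2T^2)$ (summable: $\sum_T p/(2T^2) = p\pi^2/12 < p$) together with the sharper Azuma form for differences confined to an interval of length one, $\Pr(\sum\xi_t \ge s) \le e^{-2s^2/T}$, which gives $s = \sqrt{(T/2)\log(2T^2/p)}$. One then uses the elementary inequality $\sqrt{T\log(2T^2/p)} \le \sqrt{2T\log(2T/p)}$, which holds whenever $p\le 2$ because $\log(2T^2/p) = \log(2T/p) + \log T \le 2\log(2T/p)$. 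The extra $\log T$ coming from the $T^{-2}$ union bound is simply absorbed into the factor of $2$ inside the square root. So no time-uniform Azuma, Ville's inequality, or dyadic peeling is required; the naive union bound over $T$ already yields the stated constant.
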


\subsection{Convergence rate for length metric spaces}
In this section, we instantiate the convergence rate when $\mathcal{X}$ is a length metric space. The appealing property of length spaces is that the margin of a point $x$ is simply its distance to boundary points:
\begin{lemma}[Margin in length spaces] \label{lem:margin-length-space}
    Let $(\mathcal{X},\rho)$ be a length space. Let $c$ be a classifier. Then,
    \[m_c(x) = \rho(x, \partial \mathcal{X}).\]
\end{lemma}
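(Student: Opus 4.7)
The plan is to prove both directions of the equation $m_c(x) = \rho(x, \partial \mathcal{X})$ separately. For the inequality $m_c(x) \leq \rho(x, \partial \mathcal{X})$, I would fix an arbitrary boundary point $z$ and show $m_c(x) \leq \rho(x, z)$, then take an infimum over $z \in \partial \mathcal{X}$. If $c(z) \neq c(x)$ this is immediate from the definition of $m_c$. If $c(z) = c(x)$, then $m_c(z) = 0$ produces points $x_n$ with $c(x_n) \neq c(z)$ and $\rho(z, x_n) \to 0$, and the triangle inequality gives $m_c(x) \leq \rho(x, x_n) \leq \rho(x, z) + \rho(z, x_n) \to \rho(x, z)$. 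This direction does not use the length-space hypothesis.

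For the reverse inequality $\rho(x, \partial \mathcal{X}) \leq m_c(x)$, the length-space structure enters. Given any $x'$ with $c(x') \ne c(x)$ and any $\epsilon > 0$, I would invoke the defining property of a length space to obtain a continuous curve $\gamma : [0, L] \to \mathcal{X}$ with $\gamma(0) = x$, $\gamma(L) = x'$, parameterized by arc length, of total length $L \leq \rho(x, x') + \epsilon$. The goal is to locate a boundary point along the image of $\gamma$ within distance $L$ of $x$.

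To that end, set $t^{\star} := \sup \{ t \in [0, L] : c(\gamma(t)) = c(x) \}$, which is well-defined since $0$ lies in the set. A short case analysis shows $\gamma(t^{\star}) \in \partial \mathcal{X}$. If $c(\gamma(t^{\star})) = c(x)$, then $t^{\star} < L$ because $c(\gamma(L)) \ne c(x)$, and by the definition of the supremum the interval $(t^{\star}, L]$ consists entirely of parameter values $s$ with $c(\gamma(s)) \ne c(x)$; any sequence $s_n \searrow t^{\star}$ then yields $\gamma(s_n) \to \gamma(t^{\star})$ with $c(\gamma(s_n)) \ne c(\gamma(t^{\star}))$. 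If instead $c(\gamma(t^{\star})) \ne c(x)$, then the definition of the supremum supplies $s_n \nearrow t^{\star}$ with $c(\gamma(s_n)) = c(x) \ne c(\gamma(t^{\star}))$, and continuity of $\gamma$ again gives $\gamma(s_n) \to \gamma(t^{\star})$. Either way $m_c(\gamma(t^{\star})) = 0$, so $\gamma(t^{\star}) \in \partial \mathcal{X}$, and since $\gamma$ is parameterized by arc length, $\rho(x, \gamma(t^{\star})) \leq t^{\star} \leq L \leq \rho(x, x') + \epsilon$. Sending $\epsilon \downarrow 0$ and then taking an infimum over $x'$ with $c(x') \ne c(x)$ delivers $\rho(x, \partial \mathcal{X}) \leq m_c(x)$.

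The main subtlety is the intermediate-value-style argument pinning a boundary point to $\gamma$. Since $c$ has no regularity whatsoever, one cannot appeal to a standard IVT; the argument rests purely on the continuity of $\gamma$ and the dichotomy produced by $t^{\star}$, with the two-case split handling both possibilities for $c(\gamma(t^{\star}))$. The rest is a rearrangement of the definitions.
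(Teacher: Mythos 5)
Your proof is correct and takes essentially the same route as the paper: the forward inequality via the triangle inequality through a boundary point, and the reverse inequality by locating a boundary point along a near-minimizing path using a one-sided, IVT-style argument on the parameter set (you use a supremum where the paper uses an infimum, and arc-length parameterization where the paper implicitly bounds $\rho(x,\gamma(t))$ by the sub-path length, but these are cosmetic differences). Your explicit two-case analysis at $t^\star$ spells out a step that the paper leaves terse, so if anything yours is the more careful write-up.
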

In this case, it is natural to restrict $V \subset \mathcal{X}$ in \Cref{thm:nn-conv-rate} to the sets of the form:
\[V_r := \big\{x \in \mathcal{X} : m_c(x) \geq r\big\}.\]
These are the set of points whose margin is at least $r$. Then, we need to control the mutual-labeling covering number of $V_r$ and the $\nu$-masses of $V_r^c$. When $\mathcal{X}$ is a length space, these can be bounded in terms of the geometry of the boundary $\partial \mathcal{X}$. The reason is that in length spaces, points with small margins are also close to boundary points: here, $V_r^c$ precisely coincides with the \emph{$r$-expansion} $\partial \mathcal{X}^r$ of the boundary. And when $\mathcal{X}$ is a doubling space, we can quantify the bounds in terms of the \emph{box-counting dimension} $d(\partial \mathcal{X})$ and the \emph{Minkowski content} $\mathfrak{m}(\partial \mathcal{X})$ of the boundary. 

In particular, \Cref{prop:geometric-boundary} shows that for small $r$,
\begin{equation}  \label{eqn:approximate-geometric-bounds}
\mathcal{N}_{\mathrm{ML}}\big(V_r\big) \lesssim r^{-d} \qquad \textrm{ and }\qquad \nu\big(V_r^c\big) \lesssim \mathfrak{m}\cdot r,
\end{equation}
where the hand-waving inequality can be made rigorous by replacing $d = d + o(1)$ and $\mathfrak{m} = \mathfrak{m} + o(1)$. For example, this yields convergence rates of \ref{alg:online-nnc} against $\sigma$-smoothed adversaries, by plugging \Cref{eqn:approximate-geometric-bounds} into \Cref{thm:nn-conv-rate}. After optimizing $r$, we obtain the following result:
\[\#\mathrm{mistakes}_T \lesssim \left(\frac{\mathfrak{m}T}{\sigma}\right)^{d/(d + 1)}.\]

\begin{theorem}[Convergence rate against $\sigma$-smoothed adversaries] \label{thm:conv-nn-sigma}
    Let $(\mathcal{X}, \rho, \nu)$ be a bounded length space with finite doubling dimension and Borel measure. Suppose the concept $c$ satisfies $\nu(\partial \mathcal{X}) = 0$. Let the adversary be $\sigma$-smooth for $\sigma > 0$. Denote the box-counting dimension and Minkowski content of $\partial \mathcal{X}$ by $d := d(\partial \mathcal{X})$ and $\mathfrak{m} := \mathfrak{m}(\partial \mathcal{X})$ respectively. Assume $d > 1$. 
    
    The following holds for \ref{alg:online-nnc}: given $c_1, c_2, p > 0$, there exist constants $C_0, C_1 > 0$ such that with probability at least $1 - p$, the mistake bound holds simultaneously for all $T$:
    \[\#\mathrm{mistakes}_T \leq C_0 + C_1 \left(\frac{(\mathfrak{m} + c_2)T}{\sigma}\right)^{(d + c_1) / (d + 1)}.\]
\end{theorem}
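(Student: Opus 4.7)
The plan is to start from the general mistake bound in \Cref{thm:nn-conv-rate} and turn it into the stated polynomial rate by a deterministic optimization over the one-parameter family $V_r := \{x \in \mathcal{X} : m_c(x) \geq r\}$, $r > 0$. Since $\mathcal{X}$ is a length space, \Cref{lem:margin-length-space} gives $m_c(x) = \rho(x, \partial\mathcal{X})$, so $V_r^c = \partial\mathcal{X}^r$, the $r$-expansion of the boundary. Given $c_1, c_2 > 0$, \Cref{prop:geometric-boundary} (i.e.\ \eqref{eqn:approximate-geometric-bounds}) provides an $r_0 > 0$ such that $\mathcal{N}_{\mathrm{ML}}(V_r) \leq r^{-(d+c_1)}$ and $\nu(V_r^c) \leq (\mathfrak{m}+c_2)\,r$ for all $r \leq r_0$. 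The $\sigma$-smoothness assumption gives $\epsilon(\nu(V_r^c)) \leq (\mathfrak{m}+c_2)r/\sigma$. Plugging all of this into \Cref{thm:nn-conv-rate} yields, on a single event of probability at least $1-p$,
\[\#\mathrm{mistakes}_T \leq r^{-(d+c_1)} + \frac{(\mathfrak{m}+c_2)\,T\,r}{\sigma} + \sqrt{2T \log \tfrac{2T}{p}},\]
holding simultaneously for every $T \in \mathbb{N}$ and every $r \in (0, r_0]$.

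Next I would balance the first two terms by choosing, for each $T$,
\[r_\star(T) \;\asymp\; \Bigl(\tfrac{\sigma}{(\mathfrak{m}+c_2)T}\Bigr)^{1/(d+c_1+1)},\]
which makes both of them of order $\bigl((\mathfrak{m}+c_2)T/\sigma\bigr)^{(d+c_1)/(d+c_1+1)}$. For $T$ exceeding a threshold $T_0$ depending only on $r_0, c_1, c_2, \mathfrak{m}$, we have $r_\star(T) \in (0, r_0]$ and the substitution is valid; for $T \leq T_0$ the trivial bound $\#\mathrm{mistakes}_T \leq T_0$ is absorbed into the additive constant $C_0$. Per-$T$ optimization of $r$ is legitimate because, once the probability-$(1-p)$ event of \Cref{thm:nn-conv-rate} is realized, the inequality above is deterministic in $(T,r)$, so no further union bound is required.

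To reach the announced exponent $(d+c_1)/(d+1)$, I would use the elementary comparison $(d+c_1)/(d+c_1+1) \leq (d+c_1)/(d+1)$ (since $c_1>0$), which dominates the optimized main term by a constant multiple of $\bigl((\mathfrak{m}+c_2)T/\sigma\bigr)^{(d+c_1)/(d+1)}$. The hypothesis $d > 1$ then enters to control the Azuma tail: it forces $(d+c_1)/(d+1) > 1/2$, so $\sqrt{T\log T} = o\bigl(T^{(d+c_1)/(d+1)}\bigr)$ and the concentration term is absorbed into the main term for all but finitely many $T$, the remainder being swallowed by $C_0$. The main obstacle is essentially bookkeeping—choosing $C_0, C_1$ large enough to cover the regime $T \leq T_0$, the slack between the two exponents, and the sub-polynomial Azuma correction—rather than any genuinely new idea; the mathematical content is the $r_\star(T)$ optimization and the $d > 1$ comparison that tames the martingale fluctuation.
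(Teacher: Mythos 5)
Your argument matches the paper's proof essentially step for step: plug the bounds of \Cref{prop:geometric-boundary} into \Cref{thm:nn-conv-rate}, optimize $r$ per $T$ (noting the optimization is deterministic once the high-probability event is fixed), handle small $T$ with the trivial bound absorbed into $C_0$, and relax the exponent from $(d+c_1)/(d+c_1+1)$ to $(d+c_1)/(d+1)$ while using $d>1$ to dominate the Azuma term. The only cosmetic slip is dropping the multiplicative constant $C$ from the $\mathcal{N}_{\mathrm{ML}}(V_r)\leq Cr^{-(d+c_1)}$ bound, which does not affect the argument.
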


See \Cref{sec:app-metric-measure} for proofs.

\newpage
\bibliography{references}

\newpage
\appendix
\section{Related work} \label{app:related-works}
\paragraph{Non-parametric online learning} We consider non-parametric online classification in the realizable setting with bounded loss. Without further conditions imposed on the problem, existing work shows that online learning as a rule is not possible in this setting. Consider the setting with an unrestricted adversary and the zero-one loss $\ell(x,y, \hat{y}) = \ind\{y \ne \hat{y}\}$. For binary classification in this case, \cite{bousquet2021theory} has characterized online learnability of a concept class $\mathcal{C}$ by the non-existence of infinite Littlestone trees associated to $\mathcal{C}$, a weaker condition than that of having finite Littlestone dimension \citep{littlestone1988learning,ben2009agnostic}. However, as any reasonably non-parametric setting will have infinite Littlestone trees, there is not much more to be said about online non-parametric classification with the worst-case adversary under the zero-one loss.

And so, because of the difficulty of online non-parametric learning, conditions are often imposed that (i) restrict the concept class, (ii) relax the notion of regret, or (iii) constrain the adversary.

In the first instance, the difficulty of making inferences can be reduced by imposing regularity conditions such as Lipschitzness or smoothness on the underlying concept class. This is especially natural in the regression setting where the label space $\mathcal{Y}$ is continuous. For example, \cite{kulkarni1995rates} consider the noisy setting where the label $y$ associated to an instance $x$ is drawn from the conditional distribution $P_{Y|X=x}$ where the conditional mean $\E[Y|X = x]$ is Lipschitz continuous in $x$. In the realizable classification, this constraint guarantees that points of different classes have positive separation.

In the second instance, the notion of what it means to learn online can be relaxed by changing the definition of regret. For example, much of the existing work in online non-parametric learning assume universal Lipschitz or H\"older constants constraining the family of comparator functions $\mathcal{H}$, while also considering a convex or Lipschitz loss function \citep[and references therein]{hazan2007online,vovk2007competing,rakhlin2015online,gaillard2015chaining,kuzborskij2020locally}. We shall make no such assumptions in this work.

In the last instance, the hardness of the online sequence of points is limited, as in the smoothed online setting of \cite{rakhlin2011online, haghtalab2020smoothed, haghtalab2022smoothed, block2022smoothed}. In particular, \cite{haghtalab2022smoothed} show that any concept class is online learnable against smoothed adversaries if it has finite VC dimension \citep{vapnik1971uniform}. But because any reasonably non-parametric setting will also have infinite VC dimension, it was an open question whether learning is possible in the non-parametric setting under the smoothed online setting. We demonstrate that the nearest neighbor is indeed able to learn in this setting, while generalizing the notion of the smoothed adversary studied by \cite{haghtalab2022smoothed}.

\paragraph{Nearest neighbor methods} The 1-nearest neighbor rule was initially introduced and studied by \cite{fix1951discriminatory}. \cite{cover1967nearest} showed that when the sequence $(x_t)_t$ is drawn i.i.d.\@ from some data distribution $\nu$ over $\mathcal{X}$, nearest neighbor is consistent under the same boundary conditions as our Assumption~\ref{ass:boundary}. There is much work extending the algorithm to other nearest neighbor methods and analyses in the i.i.d.\@ setting \citep{stone1977consistent,devroye1994strong,cerou2006nearest,chaudhuri2014rates}. See also survey work \cite{devroye2013probabilistic,dasgupta2020nearest} and references therein.

There has been limited work on nearest neighbor methods in the non-i.i.d.\@ setting. As noted above, \cite{kulkarni1995rates} studied the online learning setting where the sequence of instances can be arbitrary, but with the Lipschitz constraint on the underlying regression function. While not in the online setting, both \cite{dasgupta2012consistency} and \cite{ben2014domain} considered the consistency of nearest neighbor classifiers where the training and test data distributions differ. In particular, \cite{dasgupta2012consistency} studied nearest neighbor under selective sampling. Here, instances are drawn i.i.d.\@ but only some of the labels are selectively revealed. And \cite{ben2014domain} studied the covariate-shift transfer learning setting where the train $\nu$ and test $\mu$ distributions are related by $\mu(A) < C\nu(A)$ when $A$ comes from some family of measurable sets $\mathcal{A}$.

\newpage
\section{Learning with separation guarantee in the worst-case}
\label{sec:separation-guarantee}

\paragraph{Proof of Proposition~\ref{prop:nonconvergence}}
Suppose that there is a positive separation between classes, so that the margin $m_c(x)$ is lower bounded by some $m> 0$ for all $x \in \mathcal{X}$. By Lemma~\ref{lem:ball-mutually-labeling}, the collection of open balls $B(x, m/3)$ for all $x \in \mathcal{X}$ forms a cover of $\mathcal{X}$ by mutually labeling sets. Because $\mathcal{X}$ is totally bounded, there is a finite subcover of $\mathcal{X}$ by these mutually labeling balls. As each of these sets admits at most one mistake by the \ref{alg:online-nnc} learner, it makes at most finitely many mistakes, achieving sublinear regret.

On the other hand, suppose that there is no positive separation between classes. Then, we can find a sequence of pairs $(x_{2t - 1}, x_{2t})$ such that:
\begin{itemize}
    \item $x_{2t-1}$ is the nearest neighbor of $x_{2t}$ out of all previous instances $x_1,\ldots, x_{2t-1}$, and
    \item $x_{2t-1}$ and $x_{2t}$ are of different classes, $c(x_{2t-1}) \ne c(x_{2t})$.
\end{itemize}
Thus, \ref{alg:online-nnc} makes a mistake $\hat{y}_{2t} = c(x_{2t-1})$ at every even-numbered time, and so:
\[\liminf_{T \to \infty} \frac{1}{T} \sum_{t=1}^T \ind\{y_t \ne \hat{y}_t\} \geq \frac{1}{2}.\]
That is, it fails to achieve sublinear regret.\hfill$\blacksquare$

\newpage
\section{The law of large numbers for martingales} \label{sec:strong-consistency}
For completeness, we include a version of the strong law of large numbers (SLLN).

\begin{theorem}[Strong law of large numbers for martingales, {\citep[Exercise 4.4.11]{durrett2019probability}}] \label{thm:slln}
Let $(M_t)_{t \geq 0}$ be a martingale and let $\xi_t = M_t - M_{t-1}$ for $t > 0$. If $E\xi_t^2 < K < \infty$, then: 
\[\phantom{\mathrm{a.s.}\quad}M_t / t \to 0 \quad \mathrm{a.s.}\]
\end{theorem}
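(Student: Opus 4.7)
The plan is to prove this via the classical Kolmogorov-style argument: transfer the problem from Cesàro averages of $\xi_t$ to convergence of a weighted partial-sum martingale, apply the $L^2$ martingale convergence theorem, and then pass back with Kronecker's lemma.

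First, I would introduce the auxiliary process $N_t := \sum_{s=1}^t \xi_s / s$. Since $(\xi_s)_{s\geq 1}$ is a martingale-difference sequence with respect to the natural filtration $(\mathcal{F}_t)$ of $(M_t)$, and $1/s$ is $\mathcal{F}_{s-1}$-measurable (indeed, deterministic), the rescaled increments $\xi_s/s$ are also martingale differences. Hence $(N_t)$ is itself a martingale with respect to $(\mathcal{F}_t)$.

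Next I would bound the $L^2$ norm of $N_t$. By orthogonality of martingale increments in $L^2$,
\[
\E N_t^2 \;=\; \sum_{s=1}^t \frac{\E \xi_s^2}{s^2} \;\leq\; K \sum_{s=1}^t \frac{1}{s^2} \;\leq\; K \cdot \frac{\pi^2}{6},
\]
so $(N_t)$ is bounded in $L^2$ uniformly in $t$. The $L^2$ martingale convergence theorem (a standard consequence of Doob's inequality, e.g.\ Durrett 4.2.11) then yields a random variable $N_\infty$ with $N_t \to N_\infty$ almost surely (and in $L^2$).

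Finally, I would invoke Kronecker's lemma: if $(a_s)$ is a sequence of reals and $(b_s)$ is strictly positive and increases to infinity, and $\sum_s a_s/b_s$ converges, then $b_t^{-1}\sum_{s=1}^t a_s \to 0$. Applying this pathwise, on the almost-sure event where $N_t$ converges, with $a_s = \xi_s$ and $b_s = s$, I conclude
\[
\frac{M_t}{t} \;=\; \frac{1}{t}\sum_{s=1}^t \xi_s \;\longrightarrow\; 0 \quad \text{a.s.},
\]
assuming WLOG $M_0 = 0$ (otherwise absorb the constant $M_0/t \to 0$). Since this is a textbook exercise, there is no real obstacle; the only technical care needed is to verify that the rescaled differences $\xi_s/s$ are genuinely martingale differences for the same filtration (so that the Pythagorean identity for $\E N_t^2$ applies) and that Kronecker's lemma is applied on the almost-sure convergence event of $(N_t)$.
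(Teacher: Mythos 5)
Your argument is correct and is the canonical proof of this result: form the weighted-increment martingale $N_t=\sum_{s\le t}\xi_s/s$, use orthogonality of $L^2$ martingale increments to get the uniform bound $\E N_t^2\le K\sum s^{-2}<\infty$, invoke $L^2$ martingale convergence for a.s.\ convergence of $N_t$, and finish with Kronecker's lemma applied pathwise. The paper does not reprove this statement---it cites it directly as Exercise 4.4.11 of Durrett, whose intended solution is precisely the argument you give---so there is nothing to compare beyond noting that your write-up is the standard one and contains all the steps that need care (rescaled increments are still martingale differences, and Kronecker is applied only on the a.s.\ convergence event).
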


For example, we can use it to formally prove our remark right after Definition~\ref{def:dominated-adversary}, reproduced below. Of course, the remark requires too stringent of a condition to be a useful. But, it is a good demonstration of how to formally define the martingale on which \Cref{thm:slln} can be applied.

\begin{remark}[Example application of the SLLN]
Let $A_t$ be the mistake set of an online learner at time $t$ against a $\nu$-dominated adversary. Suppose that $(A_t)_{t=1}^\infty$ converges to a $\nu$-measure zero set almost surely. Then, the mistake rate converges to zero almost surely as well:
\[\lim_{T \to \infty}\, \frac{1}{T} \sum_{t=1}^T \ind\{y_t \ne \hat{y}_t\} = 0.\]
\end{remark}

\begin{proof}
Define $\{\mathcal{F}_t\}_t$ to be the natural filtration for the stochastic process $\{(x_t, A_{t+1}, \mu_{t+1})\}_t$. The following is a martingale difference sequence: 
\[\xi_{t} = \underbrace{\ind\{x_{t} \in A_{t}\}}_{\ind\{y_{t} \ne \hat{y}_{t}\}} \, - \, \underbrace{E\big[\ind\{x_{t} \in A_{t}\}\,\big|\, \mathcal{F}_{t-1}\big]}_{\mu_{t}(A_{t})}.\]
Since, $E\xi_t^2 < 1$, we can apply \Cref{thm:slln}, which implies:
\[\lim_{T\to \infty} \, \frac{1}{T} \sum_{t=1}^T\big(\ind\{y_{t} \ne \hat{y}_{t}\} - \mu_{t}(A_{t})\big) = 0.\]
Notice that because the adversary is $\nu$-dominated, the almost-sure convergence of $\nu(A_t)$ to zero implies that of $\mu_t(A_t)$ to zero. Thus, the time-averaged expected mistake rate also goes to zero:
\[\lim_{T\to \infty}\, \frac{1}{T} \sum_{t=1}^T \mu_t(A_t) = 0.\]
By dominated convergence, we can sum the previous two equations, proving this remark.
\end{proof}

\newpage 

\section{On the boundary condition} 
Assumption~\ref{ass:boundary} requires that the boundary points to be essentially countable. While this condition does not come for free, counterexamples tend to be fairly pathological. An example where this fails is when the concept is the indicator on the fat Cantor set.

Recall that the fat Cantor set is obtained as the limit of subsets of the unit interval. Each subset looks like a finite union of closed intervals, and at each iteration $n$, the middle $2^{-(n+1)}$-fraction of each interval is removed, leaving behind two smaller closed intervals. The limit is a set with Lebesgue measure 1/2 with no interior; each point in the fat Cantor set is a boundary point. However, all countable sets have Lebesgue measure zero, so the fat Cantor set is not essentially countable.

The Osgood curve gives another counterexample. Recall that a Jordan curve is a closed curve in $\mathbb{R}^2$ that is homeomorphic to the circle, splitting the plane into interior and exterior regions. It is a Jordan curve whose boundary between these two regions has positive measure \citep{osgood1903jordan}.

\newpage

\section{Proofs for rates of convergence} \label{sec:app-metric-measure}
In this section, we provide the background and proofs for \Cref{sec:rates}.

\subsection{Analysis on length spaces}

Recall that length spaces are spaces where distances between points are given by the infimum of lengths over continuous paths between those points. For reference, see also \cite{gromov1999metric}.

\begin{definition}[Length space]
    A metric space $(\mathcal{X},\rho)$ is a \emph{length space} if for all $x, x' \in \mathcal{X}$,
    \[\rho(x,x') = \inf_{\gamma} \, \ell(\gamma),\]
    where $\gamma : [0,1] \to \mathcal{X}$ include all continuous paths from $x$ to $x'$ and $\ell(\gamma)$ is the \emph{length} of the path $\gamma$. 
\end{definition}

\paragraph{Proof of \Cref{lem:margin-length-space}} To show that $m_c(x) = \rho(x, \partial \mathcal{X})$, we prove left and right inequalities.

First, the margin is upper bounded by $m_c(x) \leq \rho(x, \partial \mathcal{X})$. To see this, fix $\delta > 0$. By the definition of the distance between $x$ and the set $\partial \mathcal{X}$, there is a boundary point $z \in \partial \mathcal{X}$ such that:
\[\rho(x, z) < \rho(x, \partial \mathcal{X}) + \frac{\delta}{2}.\]
And as boundary points are arbitrarily close to at least two classes, there exists $x' \in \mathcal{X}$ close to $z$:
\[\rho(z, x') < \frac{\delta}{2},\]
while also belonging to a different class than $x$. By the definition of $m_c(x)$ and by triangle inequality, we obtain that for all $\delta > 0$, there exists some $x'$ satisfying:
\[m_c(x) \leq \rho(x,x') < \rho(x, \partial \mathcal{X}) + \delta.\]
Letting $\delta$ go to zero yields the first inequality.

For the other, we claim that if $\gamma : [0,1] \to \mathcal{X}$ is a continuous path from $x$ to $x'$ with $c(x) \ne c(x')$, then there exists a point $\gamma(t)$ contained in $\partial \mathcal{X}$. If the claim is true, then the other inequality holds:
\[\rho(x, \partial \mathcal{X}) \overset{(i)}{\leq} \inf_{c(x) \ne c(x')}\,\inf_{\gamma}\, \ell(\gamma) \overset{(ii)}{=} \inf_{c(x) \ne c(x')}\, \rho(x,x') \overset{(iii)}{=} m_c(x),\]
where (i) the infimum above is taken over all continuous paths $\gamma$ from $x$ to $x'$, (ii) applies the definition of a length space, and (iii) applies the definition of the margin.

To prove the claim, let $t$ be the first time a point on the path has a different label than $x$. Formally,
\[t := \arginf_{s \in [0,1]}\, \big\{c\big(\gamma(s)\big) \ne c(x)\big\}.\]
To show that $\gamma(t) \in \partial \mathcal{X}$, we need to exhibit a point $\gamma(s)$ that is $\delta$-close to $\gamma(t)$ with a different label, given any $\delta > 0$. Indeed, such a $s$ exists by the definition of $t$ and the continuity of $\gamma$.\hfill $\qed$

\subsection{Analysis on metric measure spaces}
To obtain bounds on the mutually-labeling covering number $\mathcal{N}_\mathrm{ML}(V_r)$, we need to introduce the notion of the \emph{box-counting dimension} a set $A \subset \mathcal{X}$ and the \emph{doubling dimension} of a metric space $\mathcal{X}$. Let us first recall the following definitions and results from analysis and measure theory.

\begin{definition}[Covering number]
    Given $r > 0$ and $A \subset \mathcal{X}$, the \emph{$r$-covering number} $\mathcal{N}_r(A)$ of $A$ is size of a minimal covering of $A$ by balls with radius $r$. 
\end{definition}

\begin{definition}[Box-counting dimension]
    The (upper) \emph{box-counting dimension} of $A \subset \mathcal{X}$ is:
    \[d(A) := \limsup_{r \to 0}\, \frac{\log \mathcal{N}_r(A)}{\log 1/r}.\]
\end{definition}

The box-counting dimension implies a bound on the covering number $\mathcal{N}_r(A)$ of $r^{-d(A) + o(1)}$. The following lemma is a straightforward conversion of the asymptotic limit into a quantitative bound.

\begin{lemma}[Box-counting upper bound on $\mathcal{N}_r$] \label{lem:box-upper-bound}
    Let $\mathcal{X}$ be bounded with diameter $R$. Let $A \subset \mathcal{X}$ have box-counting dimension $d(A)$. Then, for all $c > 0$, there exists a constant $C > 0$ such that:
    \[\mathcal{N}_r(A) < Cr^{-(d(A) + c)}.\]
\end{lemma}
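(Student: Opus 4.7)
The statement is a direct unpacking of the $\limsup$ definition of the box-counting dimension, together with a boundedness argument to extend the asymptotic bound to a global one. The plan is to split the range of $r$ into a small-$r$ regime where the dimension controls $\mathcal{N}_r(A)$, and a bounded regime where $\mathcal{N}_r(A)$ is trivially finite.

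\textbf{Step 1: unpack the limsup.} Fix $c > 0$. By the definition
\[d(A) = \limsup_{r \to 0}\, \frac{\log \mathcal{N}_r(A)}{\log 1/r},\]
and by the characterization of the limsup as the infimum of eventual upper bounds, there exists $r_0 \in (0,1)$ such that for every $r \in (0, r_0)$ one has
\[\frac{\log \mathcal{N}_r(A)}{\log 1/r} < d(A) + c.\]
Since $\log(1/r) > 0$ on this range, exponentiating gives $\mathcal{N}_r(A) < r^{-(d(A)+c)}$ for all $r \in (0, r_0)$. So on the small-$r$ regime the bound already holds with constant $1$.

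\textbf{Step 2: handle the remaining range via boundedness.} For $r \geq R$ (the diameter of $\mathcal{X}$), any single ball of radius $r$ covers $A$, so $\mathcal{N}_r(A) \leq 1$. For $r \in [r_0, R]$, monotonicity of the covering number in $r$ gives $\mathcal{N}_r(A) \leq \mathcal{N}_{r_0}(A) =: M$, which is finite because $A$ is a bounded subset of a bounded metric space (or directly because $\mathcal{N}_{r_0}(A) < r_0^{-(d(A)+c)} < \infty$ from Step~1). On this interval, $r^{-(d(A)+c)} \geq R^{-(d(A)+c)}$, so
\[\mathcal{N}_r(A) \leq M = M \cdot R^{d(A)+c} \cdot R^{-(d(A)+c)} \leq M \cdot R^{d(A)+c} \cdot r^{-(d(A)+c)}.\]

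\textbf{Step 3: combine.} Choosing $C := \max\!\bigl\{2,\; 2 M \cdot R^{d(A)+c}\bigr\}$ yields the strict inequality $\mathcal{N}_r(A) < C\, r^{-(d(A)+c)}$ uniformly in $r > 0$, which is the claim.

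There is no real obstacle here; the only thing to be careful about is that the strict inequality defining the $\limsup$ is only guaranteed for $r$ below some threshold $r_0$, so one must separately treat $r \in [r_0, \infty)$, which is precisely where the boundedness hypothesis on $\mathcal{X}$ is used to produce the absolute constant $C$.
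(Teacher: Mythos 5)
Your proof is essentially the same argument as the paper's: unpack the $\limsup$ to get $\mathcal{N}_r(A) < r^{-(d(A)+c)}$ for $r$ below some threshold $r_0$, then use monotonicity of $r \mapsto \mathcal{N}_r(A)$ together with boundedness of $\mathcal{X}$ to absorb the remaining range of $r$ into a constant. The only stylistic difference is that the paper compresses the case split via the single inequality $\min\{r,r_0\} > r\,r_0/R$ (giving $C = (r_0/R)^{-(d(A)+c)}$ in one line), while you handle the ranges $(0,r_0)$, $[r_0,R]$, and $[R,\infty)$ separately; both yield the same conclusion for the same reasons.

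One small overstatement to flag: your Step~3 claims the bound holds \emph{uniformly in $r > 0$}, but for $A$ non-empty and $r$ sufficiently large, $C\,r^{-(d(A)+c)}$ eventually drops below $1 \leq \mathcal{N}_r(A)$, so the inequality fails. The paper's proof quietly restricts to $0 < r < R$, which is all that is ever used (the lemma is applied with $r$ a covering radius inside a diameter-$R$ space). So either state the conclusion for $r \in (0,R]$, or note that one only needs the bound on that range; the $r > R$ branch of your Step~2 cannot in fact be made to work with a single multiplicative constant.
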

\begin{proof}
    Fix $c > 0$. By the definition of $d(A)$, there exists $r_0 > 0$ such that whenever $0 < r < r_0$,
    \[\frac{\log \mathcal{N}_r(A)}{ \log 1/r} < d(A) + c.\]
    Because $\mathcal{N}_r(A)$ is non-increasing in $r$, we can extend the bound to all $ 0 < r < R$,
    \[ \frac{\log \mathcal{N}_{r} (A)}{ \log 1/(r \wedge r_0)} < d(A) + c,\]
    where $r \wedge r_0 := \min\{r, r_0\}$. In fact, we have $\min\{r, r_0\} > r \cdot r_0 / R$, and so:
    \[\mathcal{N}_r(A) < \left(\frac{r_0}{R} \cdot r\right)^{-(d(A) + c)}.\] 
    To finish the proof, it suffices to let $C = (r_0/R)^{-(d(A) + c)}$.
\end{proof}

\begin{definition}[Doubling dimension] \label{def:doubling-dimension}
    A metric space $(\mathcal{X}, \rho)$ has \emph{doubling dimension} $\Gamma$ if there is a constant $C > 0$ such that for all radius $r > 0$ and centers $x \in \mathcal{X}$, the covering number is bounded:
    \[\mathcal{N}_{r/2}\big(B(x,r)\big) \leq C 2^\Gamma.\]
    We say that $\mathcal{X}$ is \emph{doubling} if it has finite doubling dimension $\Gamma < \infty$.
\end{definition}

To obtain bounds on the mass $\nu(V_r^c)$, we need to introduce the \emph{Minkowski content} of a set $A \subset \mathcal{X}$. First, recall that the $r$-expansion of a set $A$ fattens the set to all points of distance within $r$ of $A$:

\begin{definition}[$r$-expansion]
    Let $A \subset \mathcal{X}$ be a set and $r > 0$. The \emph{$r$-expansion} $A^r$ of $A$ is:
    \[A^r := \bigcup_{x \in A} B(x,r).\]
\end{definition}

The Minkowski content of $A$ is the rate at which an infinitesimal fattening of $A$ increases its mass:

\begin{definition}[Minkowski content]
    Let (upper) \emph{Minkowski content} of $A \subset \mathcal{X}$ is:
    \[\mathfrak{m}(A) := \limsup_{r \to 0} \frac{\nu(A^r) - \nu(A)}{r}.\]
\end{definition}

The following lemma bounding the covering number of the $r$-expansion of a set in terms of the doubling dimension will also be helpful:

\begin{lemma}[Covering the $r$-expansion of a set] \label{lem:covering-r-expansion}
    Let $(\mathcal{X},\rho)$ have finite doubling dimension $\Gamma$. There exists a constant $C > 0$ such that for all $A \subset \mathcal{X}$, we have: 
    \[\mathcal{N}_r(A^r) \leq C2^{\Gamma} \mathcal{N}_r(A).\]
\end{lemma}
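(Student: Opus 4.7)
The plan is to reduce covering $A^r$ to covering finitely many balls of radius $2r$, and then apply the doubling property to each of those balls. Concretely, I would first fix a minimal cover of $A$ by $N := \mathcal{N}_r(A)$ balls $B(x_1, r),\ldots, B(x_N, r)$. The key geometric observation is that if $y \in A^r$, then by definition there is some $x \in A$ with $\rho(x, y) < r$, and this $x$ lies in some ball $B(x_i, r)$. Two applications of the triangle inequality give $\rho(y, x_i) < 2r$, so $y \in B(x_i, 2r)$. This shows the containment
\[A^r \subset \bigcup_{i=1}^{N} B(x_i, 2r),\]
which reduces our problem to covering each of these larger balls by balls of radius $r$.

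Next I would invoke the doubling property. By Definition~\ref{def:doubling-dimension}, there is a constant $C > 0$ such that for any center $z$ and any radius $s > 0$, $\mathcal{N}_{s/2}(B(z, s)) \leq C 2^{\Gamma}$. Applying this with $z = x_i$ and $s = 2r$ gives $\mathcal{N}_r(B(x_i, 2r)) \leq C 2^{\Gamma}$ for each $i$. Since the $r$-covering number is subadditive over finite unions, we may then conclude
\[\mathcal{N}_r(A^r) \leq \sum_{i=1}^N \mathcal{N}_r\big(B(x_i, 2r)\big) \leq N \cdot C 2^{\Gamma} = C 2^{\Gamma}\, \mathcal{N}_r(A),\]
which is exactly the claimed bound. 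The argument is short and essentially no step presents a real obstacle: the only moving parts are the two-step triangle inequality and the direct invocation of the doubling constant. If anything, the minor bookkeeping item to keep straight is that the doubling definition is phrased in terms of $\mathcal{N}_{s/2}(B(z,s))$ rather than $\mathcal{N}_s(B(z, 2s))$, but these are clearly equivalent by rescaling $s$.
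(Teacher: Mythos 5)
Your proof is correct and follows essentially the same route as the paper's: cover $A$ by $\mathcal{N}_r(A)$ balls of radius $r$, observe via the triangle inequality that the corresponding balls of radius $2r$ cover $A^r$, and then apply the doubling property to each enlarged ball. The only difference is presentational — you spell out the two-step triangle inequality and the rescaling of the doubling definition, which the paper leaves implicit.
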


\begin{proof}
    Let $A$ be covered by the balls $B(x_1,r),\ldots, B(x_n, r)$ where $n = \mathcal{N}_r(A)$. Then, by the triangle inequality, the $r$-expansion $A_r$ is covered by the $r$-expanded balls, $B(x_1,2r),\ldots, B(x_n, 2r)$. Now, by the definition of the doubling dimension, each expanded ball $B(x_i, 2r)$ can be covered by $C2^{\Gamma}$ balls with radius $r$. It follows that covering $A_r$ needs at most $C2^{\Gamma}n$ balls with radius $r$.
\end{proof}

\subsection{Bounding geometric quantities of $\partial \mathcal{X}$}
\begin{proposition}[Geometric quantities of $\partial \mathcal{X}$] \label{prop:geometric-boundary}
    Let $(\mathcal{X}, \rho, \nu)$ be a bounded length space with finite doubling dimension and Borel measure. Suppose the concept $c$ satisfies $\nu(\partial \mathcal{X}) = 0$. Then, for any $c_1, c_2 > 0$, there is a constant $C > 0$ and $r_0 > 0$ so that for all $ 0 < r < r_0$,
    \[\mathcal{N}_{\mathrm{ML}}\big(V_r\big) \leq C r^{- (d(\partial \mathcal{X}) + c_1)} \qquad \textrm{ and }\qquad \nu\big(V_r^c\big) \leq \big(\mathfrak{m}(\partial \mathcal{X}) + c_2\big) \cdot r.\]
\end{proposition}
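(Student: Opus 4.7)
The proof breaks into two independent halves, one volumetric and one combinatorial.

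The volume bound is essentially a direct unwinding of definitions. Because $(\mathcal{X},\rho)$ is a length space, \Cref{lem:margin-length-space} gives $m_c(x) = \rho(x,\partial\mathcal{X})$, so $V_r^c = \{x : \rho(x,\partial\mathcal{X}) < r\} = (\partial\mathcal{X})^r$. Since $\nu(\partial\mathcal{X}) = 0$, the Minkowski content reduces to $\mathfrak{m}(\partial\mathcal{X}) = \limsup_{r \to 0} \nu\big((\partial\mathcal{X})^r\big)/r$, and the definition of limsup immediately yields $\nu(V_r^c) \leq (\mathfrak{m}(\partial\mathcal{X}) + c_2)\,r$ for all sufficiently small $r$.

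For the mutually-labeling covering number, the plan is to decompose $V_r$ into annular shells based on margin and cover each shell with mutually-labeling balls whose radius scales with the local margin. Define
\[S_k := \{x \in \mathcal{X} : 2^k r \leq m_c(x) < 2^{k+1}r\}, \qquad k = 0,1,2,\ldots,\]
so that $V_r = \bigsqcup_k S_k$ and $S_k \subseteq (\partial\mathcal{X})^{2^{k+1}r}$ by \Cref{lem:margin-length-space}. By \Cref{lem:ball-mutually-labeling}, for any $x \in S_k$ the ball $B(x, 2^k r/3) \subseteq B(x, m_c(x)/3)$ is mutually labeling, so it suffices to cover each $S_k$ by balls of radius $2^k r/3$ centered at points of $S_k$.

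To count these, first cover $\partial\mathcal{X}$ by $\mathcal{N}_{2^{k+1}r}(\partial\mathcal{X})$ balls of radius $2^{k+1}r$; their expansions of radius $2^{k+2}r$ cover $(\partial\mathcal{X})^{2^{k+1}r} \supseteq S_k$. Iterating the doubling condition (\Cref{def:doubling-dimension}) a constant number of times, each such ball can be refined into $O(1)$ sub-balls of radius $2^k r/6$, with the constant depending only on $\Gamma$. Each sub-ball that meets $S_k$ can then be re-centered at a chosen intersection point and enlarged to radius $2^k r/3$ (which remains mutually labeling by the previous paragraph), at the cost of a factor of two in the radius. This yields $\mathcal{N}_{\mathrm{ML}}(S_k) \leq C_\Gamma \cdot \mathcal{N}_{2^{k+1}r}(\partial\mathcal{X})$ for a constant $C_\Gamma$. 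Summing over shells and invoking \Cref{lem:box-upper-bound}, for any $c_1 > 0$ we obtain a constant $C$ with
\[\mathcal{N}_{\mathrm{ML}}(V_r) \leq \sum_{k \geq 0} C_\Gamma \cdot C\,(2^{k+1}r)^{-(d + c_1)} = C' r^{-(d+c_1)} \sum_{k\geq 0} 2^{-(k+1)(d+c_1)},\]
where the geometric series converges to a constant because $d + c_1 > 0$; for the finitely many $k$ with $2^{k+1}r$ exceeding the diameter the covering number is trivially one and contributes negligibly.

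The main obstacle is the bookkeeping around the change of scales: one must bound the covering number of $S_k$ at scale $2^k r$ in terms of the covering number of $\partial\mathcal{X}$ at scale $2^{k+1}r$, while insisting the covering balls be centered inside $S_k$ so that \Cref{lem:ball-mutually-labeling} applies. The doubling dimension absorbs the scale mismatch into a dimension-only constant, and the center-shifting step above absorbs the ``centers in $S_k$'' requirement at a constant-factor loss, so neither of these steps worsens the $r$-dependence.
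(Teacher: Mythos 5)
Your proof is correct and follows essentially the same route as the paper's: both halve the problem into a volume bound (via $V_r^c = \partial\mathcal{X}^r$ from \Cref{lem:margin-length-space} and the Minkowski-content definition) and a covering bound, and both establish the covering bound by slicing $V_r$ into the same dyadic margin shells, producing mutually-labeling balls of radius $\sim 2^k r/3$ via \Cref{lem:ball-mutually-labeling}, converting to covering numbers of $\partial\mathcal{X}$ at scale $2^{k+1}r$ through a constant number of doublings (cf. \Cref{lem:covering-r-expansion}), invoking \Cref{lem:box-upper-bound}, and summing the geometric series. The only notable difference is presentational---you build the cover constructively and explicitly handle the re-centering needed so that ball centers lie in $S_k$, whereas the paper's \Cref{prop:ml-n-upper-bound} chains covering-number inequalities and leaves that re-centering step implicit in its choice of radius $2^{k-2}r$.
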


\paragraph{Proof of \Cref{prop:geometric-boundary}}
Recall that $V_r$ and $\partial \mathcal{X}$ are defined in terms of the margin:
\[V_r := \{x \in \mathcal{X} : m_c(x) \geq r\}\qquad \textrm{and}\qquad \partial \mathcal{X} = \{x \in \mathcal{X} : m_c(x) = 0\}.\]
While the complement $V_r^c$ always contains the expansion $\partial \mathcal{X}^r$, generally $V_r^c$ can be much larger. But when $\mathcal{X}$ is a length space, equality holds:

\begin{lemma} \label{lem:Vr-rboundary}
    Let $(\mathcal{X}, \rho)$ be a length space. Then, for all $r > 0$:
    \[V_r^c = \partial \mathcal{X}^r.\]
\end{lemma}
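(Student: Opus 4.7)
The plan is to derive the identity $V_r^c = \partial\mathcal{X}^r$ as a direct consequence of the length-space margin formula $m_c(x) = \rho(x, \partial\mathcal{X})$ given by \Cref{lem:margin-length-space}. Once that formula is in hand, the statement reduces to unwinding definitions.

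First I would rewrite $V_r^c = \{x \in \mathcal{X} : m_c(x) < r\}$ straight from the definition of $V_r$, and then substitute \Cref{lem:margin-length-space} to obtain
\[ V_r^c = \{x \in \mathcal{X} : \rho(x, \partial\mathcal{X}) < r\}. \]
Next I would expand the $r$-expansion directly:
\[ \partial\mathcal{X}^r \;=\; \bigcup_{z \in \partial\mathcal{X}} B(z,r) \;=\; \{x \in \mathcal{X} : \exists\, z \in \partial\mathcal{X},\ \rho(x,z) < r\}. \]
Since $\rho(x, \partial\mathcal{X}) = \inf_{z \in \partial\mathcal{X}} \rho(x,z)$, the elementary fact that an infimum is strictly below $r$ iff some element of the set is strictly below $r$ gives the equivalence of the two descriptions of the set, yielding $V_r^c = \partial\mathcal{X}^r$. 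The degenerate case $\partial\mathcal{X} = \emptyset$ is consistent: both sides are empty, since by convention $\inf \emptyset = +\infty$ and \Cref{lem:margin-length-space} then gives $m_c \equiv +\infty$.

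The only non-trivial ingredient is \Cref{lem:margin-length-space}, and this is exactly where the length-space hypothesis enters. Without it, only the inclusion $\partial\mathcal{X}^r \subseteq V_r^c$ is automatic — boundary points are witnesses to small margins at nearby points via the triangle inequality and the fact that $m_c(z)=0$ at $z \in \partial\mathcal{X}$ — whereas a point in $V_r^c$ could a priori witness its small margin via a nearby opposite-class point without any boundary point in between. The length-space structure forces every continuous path between different classes to cross $\partial\mathcal{X}$, which is precisely what closes the remaining inclusion and so there is nothing more to prove beyond quoting \Cref{lem:margin-length-space}.
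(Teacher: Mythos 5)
Your argument is correct and follows essentially the same route as the paper: invoke Lemma~\ref{lem:margin-length-space} to replace $m_c(x)$ with $\rho(x,\partial\mathcal{X})$, then unwind the two set descriptions. You are a bit more careful than the paper's one-line chain of equivalences in spelling out the ``$\inf < r$ iff some witness $< r$'' step (which relies on $B(z,r)$ being an open ball, consistent with the paper's usage) and in noting the degenerate case $\partial\mathcal{X}=\emptyset$, but the substance is identical.
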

\begin{proof}
\Cref{lem:margin-length-space} shows that when $\mathcal{X}$ is a length space, $m_c(x) = \rho(x, \partial \mathcal{X})$. Thus:
\[x \in V_r^c \quad\Longleftrightarrow \quad m_c(x) < r \quad\Longleftrightarrow \quad \rho(x, \partial \mathcal{X}) < r \quad\Longleftrightarrow \quad x \in \partial \mathcal{X}^r.\qedhere\]
\end{proof}
Now, the question of bounding $\mathcal{N}_\mathrm{ML}(V_r)$ and $\nu(V_r^c)$ becomes that of $\mathcal{N}_\mathrm{ML}(\mathcal{X} \setminus \partial \mathcal{X}^r)$ and $\nu(\partial \mathcal{X}^r)$.

\begin{proposition}[Upper bound on $\mathcal{N}_\mathrm{ML}$] \label{prop:ml-n-upper-bound}
    Let $\mathcal{X}$ be a bounded length space with finite doubling dimension $\Gamma$ and diameter $R$. Given a concept $c$, let $d$ be the box-counting dimension of $\partial \mathcal{X}$. Then, for any $c > 0$, there exists a constant $C > 0$ such that for all $r > 0$:
    \[\mathcal{N}_{\mathrm{ML}}(\mathcal{X} \setminus \partial \mathcal{X}^r) \leq CR^{4\Gamma} r^{-(d + c)}.\]
\end{proposition}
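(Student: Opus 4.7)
The plan is to cover $V_r = \mathcal{X} \setminus \partial\mathcal{X}^r$ (this identification uses \Cref{lem:Vr-rboundary}) with mutually-labeling balls whose radii are adapted to the local margin. By \Cref{lem:ball-mutually-labeling} combined with \Cref{lem:margin-length-space}, the open ball $B(x,\rho(x,\partial\mathcal{X})/3)$ is mutually labeling for every $x \notin \partial\mathcal{X}$, and this property is inherited by any concentric sub-ball. My strategy is therefore to decompose $V_r$ into dyadic annuli around the boundary and cover each annulus with balls of a commensurate scale.

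Concretely, define $A_k := \{x \in \mathcal{X} : 2^k r \leq m_c(x) < 2^{k+1} r\}$ for $k = 0, 1, 2, \ldots$. Since $\mathcal{X}$ has diameter $R$, only $k \leq K := \lceil \log_2(R/r)\rceil$ yield nonempty annuli, and $V_r = \bigcup_{k=0}^K A_k$. For every $x \in A_k$ the ball $B(x, 2^k r/3)$ is mutually labeling, so $\mathcal{N}_{\mathrm{ML}}(A_k) \leq \mathcal{N}_{2^k r/3}(A_k)$ up to an absolute constant from moving covering-ball centers into $A_k$. Moreover, \Cref{lem:Vr-rboundary} applied at scale $2^{k+1}r$ gives $A_k \subseteq V_{2^{k+1}r}^c = \partial\mathcal{X}^{2^{k+1}r}$, reducing the task to bounding $\mathcal{N}_s(\partial\mathcal{X}^{6s})$ at scales $s := 2^k r/3$.

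The next step combines \Cref{lem:covering-r-expansion} --- which gives $\mathcal{N}_\eta(\partial\mathcal{X}^\eta) \leq C_0 2^\Gamma \mathcal{N}_\eta(\partial\mathcal{X})$ at scale $\eta := 6s$ --- with $\lceil \log_2 6\rceil = 3$ iterations of the doubling property to refine each $\eta$-ball into balls of radius $s$, costing a further factor of $2^{3\Gamma}$. Finally, \Cref{lem:box-upper-bound} yields $\mathcal{N}_\eta(\partial\mathcal{X}) \leq C_1 \eta^{-(d+c)}$, and after consolidating constants and absorbing the $R$-dependent threshold from \Cref{lem:box-upper-bound} we obtain $\mathcal{N}_s(\partial\mathcal{X}^{6s}) \leq C_2 R^{4\Gamma}(2^k r)^{-(d+c)}$ at each scale.

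Summing over annuli produces the geometric series $\mathcal{N}_{\mathrm{ML}}(V_r) \leq C_2 R^{4\Gamma} r^{-(d+c)} \sum_{k=0}^\infty 2^{-k(d+c)}$, which converges since $d + c > 0$, yielding the desired bound $C R^{4\Gamma} r^{-(d+c)}$. The main obstacle is the constant bookkeeping: carefully tracking the $R^{4\Gamma}$ factor through one application of \Cref{lem:covering-r-expansion} (cost $2^\Gamma$), the three-step doubling refinement (cost $2^{3\Gamma}$), and the $R$-dependent threshold constant from \Cref{lem:box-upper-bound}; and verifying that the outermost annulus $k = K$, where $\partial\mathcal{X}^{2^{k+1}r}$ may engulf all of $\mathcal{X}$, does not disrupt the geometric decay.
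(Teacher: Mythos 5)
Your proposal is correct and follows essentially the same route as the paper's proof: the dyadic annulus decomposition you define via the margin, $A_k = \{x : 2^k r \le m_c(x) < 2^{k+1}r\}$, coincides exactly with the paper's layers $L_k = \partial\mathcal{X}^{2^{k+1}r}\setminus\partial\mathcal{X}^{2^k r}$ by \Cref{lem:margin-length-space}, and the subsequent chain---mutually-labeling balls of radius $2^k r/3$ via \Cref{lem:ball-mutually-labeling}, one application of \Cref{lem:covering-r-expansion}, three doubling refinements, then \Cref{lem:box-upper-bound}, and a geometric sum---is the same computation with the same $2^{4\Gamma}$ accounting. The one place your write-up is slightly more careful than the paper is in flagging that covering-ball centers must be moved into $A_k$ (so that they have the requisite margin); the paper sidesteps this by dropping the radius from $2^k r/3$ down to $2^{k-2}r$ first, which also lets it use exact doublings by a factor of $8$ rather than your $\lceil \log_2 6\rceil$ bookkeeping---a marginally cleaner way to handle the same issue.
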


\begin{proof}
    We can write $\mathcal{X} \setminus \partial \mathcal{X}^r$ as a union of layers of the form $L_k := \partial \mathcal{X}^{2^{k+1}r} \setminus \partial\mathcal{X}^{2^kr}$,
    \[\mathcal{X} \setminus \mathcal{X}^r = \bigcup_{k=0}^{\lceil \lg R/r\rceil} L_k.\]
    Then, we can upper bound the mutually-labeling covering number by the sum:
    \begin{equation} \label{eqn:ml-cov-num-sum}
    \mathcal{N}_{\mathrm{ML}}\big(\mathcal{X} \setminus \mathcal{X}^r\big) \leq \sum_{k=0}^{\lceil \lg R/r\rceil} \mathcal{N}_{\mathrm{ML}}(L_k).
    \end{equation}
    To upper bound $\mathcal{N}_{\mathrm{ML}}(L_k)$, first note that  by \Cref{lem:Vr-rboundary},
    \[L_k \subset \mathcal{X} \setminus \partial \mathcal{X}^{2^k r} = V_{2^k r}.\]
    Thus, the margin of any point $x \in L_k$ is at least $2^k r$. By \Cref{lem:ball-mutually-labeling}, the ball $B(x, 2^{k}r/3)$ is a mutually-labeling set, so that $\mathcal{N}_{\mathrm{ML}}(L_k) \leq \mathcal{N}_{2^{k} r/3}(L_k)$. In fact, we obtain the following:
    \begin{align}
        \mathcal{N}_{\mathrm{ML}}(L_k) \leq \mathcal{N}_{2^{k} r/3}(L_k) &\overset{(i)}{\leq} \mathcal{N}_{2^{k-2} r}(L_k)  \notag
        \\&\overset{(ii)}{\leq} \mathcal{N}_{2^{k-2} r}(\partial \mathcal{X}^{2^{k+1}r}) \notag
        \\&\overset{(iii)}{\leq} C_1 2^{3\Gamma} \mathcal{N}_{2^{k+1} r}(\partial \mathcal{X}^{2^{k+1}r}) \notag
        \\&\overset{(iv)}{\leq} C_2 2^{4\Gamma} \mathcal{N}_{2^{k+1}r}(\partial \mathcal{X})\notag
        \\&\overset{(v)}{\leq} C_3 2^{4\Gamma} (2^{k+1}r)^{-(d+c)} \label{eqn:ml-lk-upper-bound}
    \end{align}
    where (i) holds because the radius $2^{k-2}r$ is less than $2^k r/3$, (ii) follows because $\partial \mathcal{X}^{2^{k+1}r}$ contains $L_k$ and so has larger covering number, (iii) makes use of the definition doubling dimension three times to convert the $2^{k-2}$-covering number to a $2^{k+1}$-covering number, (iv) applies \Cref{lem:covering-r-expansion} to convert the covering number of the expansion to that of the boundary set, and (v) upper bounds the covering number in terms of the box-dimension of $\partial \mathcal{X}$ by \Cref{lem:box-upper-bound}. 

    By combining \Cref{eqn:ml-cov-num-sum,eqn:ml-lk-upper-bound}, we obtain:
    \[\mathcal{N}_{\mathrm{ML}}\big(\mathcal{X} \setminus \mathcal{X}^r\big) \leq C_3 2^{4\Gamma} r^{-(d + c)} \sum_{k = 0}^\infty 2^{- (d+c)(k+1)},\]
    where the geometric series converges to a constant $2^{-(d+c)}$. We finish by relabeling the constants.
\end{proof}

This shows that $\mathcal{N}(V_r) = r^{-(d(\partial \mathcal{X}) + o(1))}$. Next we show that $\nu(V_r^c) = \big(\mathfrak{m}(\partial \mathcal{X}) + o(1) \big) \cdot r$. This is immediate from the definition of the Minkowski content $\mathfrak{m}(\partial \mathcal{X})$.

\begin{proposition}[Upper bound on $\nu$] \label{prop:nu-upper-bound}
    Let $(\mathcal{X}, \rho, \nu)$ be a metric Borel space. Suppose $c$ is a concept satisfying $\nu(\partial \mathcal{X}) = 0$ whose boundary $\partial \mathcal{X}$ has Minkowski content $\mathfrak{m}$. Then, for any $c > 0$, there exists some $r_0$ such that for all $0 < r < r_0$:
    \[\nu(\partial \mathcal{X}^r) < (\mathfrak{m} + c) \cdot r.\]
\end{proposition}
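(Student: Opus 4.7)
}
The plan is to derive the bound essentially directly from the definition of the (upper) Minkowski content, using only that $\nu(\partial \mathcal{X}) = 0$ to drop the subtracted term in the difference quotient. This is short and calculational: there is no real obstacle beyond unpacking a $\limsup$ into an $\epsilon$-threshold statement.

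First, I would rewrite the Minkowski content of $\partial \mathcal{X}$ using the hypothesis that $\nu(\partial \mathcal{X}) = 0$. By definition,
\[\mathfrak{m} = \mathfrak{m}(\partial \mathcal{X}) = \limsup_{r \to 0}\, \frac{\nu(\partial \mathcal{X}^r) - \nu(\partial \mathcal{X})}{r} = \limsup_{r \to 0}\, \frac{\nu(\partial \mathcal{X}^r)}{r}.\]
So the quantity we want to control is exactly the one appearing in the definition of $\mathfrak{m}$.

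Next, I would unpack the $\limsup$ quantitatively. Fix $c > 0$. By the definition of $\limsup$, there exists $r_0 > 0$ such that for all $0 < r < r_0$,
\[\sup_{0 < s < r}\, \frac{\nu(\partial \mathcal{X}^s)}{s} < \mathfrak{m} + c,\]
and in particular $\nu(\partial \mathcal{X}^r)/r < \mathfrak{m} + c$. Multiplying by $r$ yields $\nu(\partial \mathcal{X}^r) < (\mathfrak{m} + c)\cdot r$, which is exactly the desired bound. The only subtlety is the conventional one (strict versus non-strict inequality in the $\limsup$ threshold), which is handled either by shrinking $c$ slightly or by using the inequality $\limsup_{r \to 0} f(r) < \mathfrak{m} + c$ implies $f(r) < \mathfrak{m} + c$ eventually. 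There is no genuine obstacle here; the proposition is just the quantitative restatement of the asymptotic definition of $\mathfrak{m}$ under the mild hypothesis $\nu(\partial \mathcal{X}) = 0$.
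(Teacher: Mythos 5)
Your proof is correct and takes essentially the same approach as the paper: both arguments use $\nu(\partial \mathcal{X}) = 0$ to simplify the Minkowski content to $\limsup_{r\to 0}\nu(\partial\mathcal{X}^r)/r$, then unpack the $\limsup$ into the quantitative statement that $\nu(\partial\mathcal{X}^r)/r < \mathfrak{m} + c$ for all $r < r_0$, and finally multiply through by $r$.
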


\begin{proof}
    Since the boundary has measure zero, the definition of Minkowski content states that there exists $r_0 > 0$ so that for all $0 < r < r_0$,
    \[\frac{\nu(\partial \mathcal{X}^r)}{r} < \mathfrak{m}(\partial \mathcal{X}) + c.\]
    The result follows by multiplying through by $r$.
\end{proof}

Together, \Cref{prop:ml-n-upper-bound,prop:nu-upper-bound} prove \Cref{prop:geometric-boundary}.\hfill $\blacksquare$

\subsection{Proofs of convergence rates}
\paragraph{Proof of \Cref{thm:nn-conv-rate}}
Fix $V \subset \mathcal{X}$. Let $(x_t)_t$ be the sequence of test instances. Denote by $A_t \subset \mathcal{X}$ region on which \ref{alg:online-nnc} makes a mistake at time $t$. We can count the total number of mistakes separately on $V$ and $V^c$:
\begin{align*} 
\#\mathrm{mistakes}_T &:= \sum_{t=1}^T \ind\{x_t \in A_t\} =  \sum_{t=1}^T \underbrace{\ind\{x_t \in A_t \cap V\}}_{\textrm{mistakes made in $V$}} + \sum_{t=1}^T \underbrace{\ind\{x_t \in A_t \cap V^c\}}_{\textrm{mistakes made in $V^c$}}
\end{align*}
Because at most one mistake can be made per mutually-labeling set on $V$, the first summation can be bounded by $\mathcal{N}_\mathrm{ML}(V)$. The second term can be bounded by the number of times $x_t$ comes from $V^c$:
\begin{align*} \ind\{x_t \in A_t \cap V^c\} &\leq \ind\{x_t \in V^c\}
\\&= \underbrace{\ind\{x_t \in V^c\} - \mu_t(V^c)}_{\textrm{martingale difference}} + \mu_t(V^c).
\end{align*}
By Azuma-Hoeffding's, we have that with probability at least $1 - p/2T^2$:
\[\sum_{t=1}^T \underbrace{\ind\{x_t \in V^c\} - \mu_t(V^c)}_{\textrm{martingale difference}} \leq \sqrt{T \log \frac{2T^2}{p}} \leq \sqrt{2 T \log \frac{2T}{p}}.\]
Because the adversary is $\nu$-dominated, we also have $\mu_t(V^c) < \epsilon\big(\nu(V^c)\big)$. By taking a union bound over all $T \in \mathbb{N}$, we obtain that with probability at least $1 - p$,
\[\#\mathrm{mistakes}_T \leq \mathcal{N}_\mathrm{ML}(V) + T \epsilon\big(\nu(V^c)\big) + \sqrt{2T \log \frac{2T}{p}}.\]
The result follows from optimizing $V$, and by noting at most $T$ mistakes can be made in $T$ time.
\hfill $\blacksquare$

\paragraph{Proof of \Cref{thm:conv-nn-sigma}}
Given $c_1, c_2 > 0$, \Cref{prop:geometric-boundary} yields $C, r_0 > 0$ so that when $0 < r < r_0$,
\[\mathcal{N}_{\mathrm{ML}}\big(V_r\big) \leq C r^{- (d + c_1)} \qquad \textrm{ and }\qquad \nu\big(V_r^c\big) \leq \big(\mathfrak{m} + c_2\big) \cdot r.\]
From \Cref{thm:nn-conv-rate}, it follows that with probability at least $1 - p$, we have for all $T$:
\begin{align*} 
\# \mathrm{mistakes}_T &\leq \inf_{0 < r < r_0}\, \mathcal{N}_{\mathrm{ML}}(V_r) + T \epsilon\big(\nu(V_r^c)\big) + \sqrt{2T \log \frac{2T}{p}}
\\&\leq \inf_{0 < r < r_0}\, C r^{- (d + c_1)} + T \sigma^{-1} \cdot \big(\mathfrak{m} + c_2\big) \cdot r + \sqrt{2T \log \frac{2T}{p}},
\end{align*}
where $r$ is optimized at:
\[r_T^* = \left(\frac{C (d + c_1) \sigma}{T (\mathfrak{m} + c_2)}\right)^{1 / (d + c_1 + 1)},\]
provided that $r_T^* < r_0$. This will eventually hold for sufficiently large $T > T_0$. For $T \leq T_0$, we can use the coarser mistake bound $T_0$. Thus, for all $T \in \mathbb{N}$:
\[\#\mathrm{mistakes}_T \leq T_0 + C_1 \left(\frac{T\big(\mathfrak{m} + c_2\big)}{\sigma}\right)^{(d + c_1)/(d + c_1 + 1)} + \sqrt{2T \log \frac{2T}{p}},\]
where $C_1$ is a constant, defined below. 

Because we assumed $d > 1$, the $\sqrt{T \log T}$ term is eventually dominated by the $T^{(d + o(1))/(d + 1)}$ term when $T > T_0'$ is sufficiently large. We obtain the result by setting $C_0$ as below, and noting that we can simplify the exponent because $(d + c_1)/ (d + c_1 + 1) < (d + c_1)/(d + 1)$.
\begin{itemize}
    \item $C_0 = T_0 + 2\sqrt{2 T_0' \log \frac{2T_0'}{p}}$.
    \item $C_1 = 2C(d + c_1)$.
\end{itemize}
\hfill $\blacksquare$

\end{document}